\newcommand{\T}{{\mbox{\tiny \sf T}}}
\renewcommand{\T}{{\mathsf T}}
\newtheorem{theorem}{Theorem}
\newtheorem{lemma}[theorem]{Lemma}
\newtheorem{corollary}[theorem]{Corollary}
\newcommand{\dint}{\mathrm{d}}
\title{\LARGE \bf
	TIE: Time-Informed Exploration for Robot Motion Planning
}
\author{Sagar Suhas Joshi$^{1}$~ Seth Hutchinson$^{2}$ ~ Panagiotis Tsiotras$^{3}$
	\thanks{$^{1,2,3}$ Institute for Robotics and Intelligent Machines, Georgia Institute of Technology, USA.
	Email: {\small \{sagarsjoshi94, seth, tsiotras\}@gatech.edu}}
}
\begin{document}
	
	\maketitle
	\thispagestyle{empty}
	\pagestyle{empty}

	\begin{abstract}
		
		Anytime sampling-based methods are an attractive technique for solving kino-dynamic motion planning problems.
		These algorithms scale well to higher dimensions and can efficiently handle state and control constraints.
		However, an intelligent exploration strategy is required to accelerate their convergence and avoid redundant computations.
		Using ideas from reachability analysis, this work defines a "Time-Informed Set", that focuses the search for time-optimal kino-dynamic planning after an initial solution is found. 
		Such a Time-Informed Set includes all trajectories that can potentially improve the current best solution and hence
		exploration outside this set is redundant. 
		Benchmarking experiments show that an exploration strategy based on the TIS can accelerate the convergence of sampling-based kino-dynamic motion planners.
		
	\end{abstract}
	
\section{INTRODUCTION}

Sampling-based motion planners incrementally build a connectivity graph by generating random samples in the search-space. Popular algorithms such as RRT~\cite{lavalle2001randomized} can solve challenging problems in higher-dimensional spaces, but can only ensure probabilistic completeness. The RRT* algorithm \cite{karaman2011sampling} combines the exploration procedure in RRT with a "local rewiring" module to guarantee asymptotic optimality. 
Algorithms such as RRT$^{\#}$~\cite{arslan2013use}, FMT*~\cite{janson2015fast} and BIT*~\cite{gammell2015batch} use heuristics along with dynamic programming ideas to achieve faster convergence than RRT*. 

The "geometric" versions of the above sampling-based algorithms ignore kino-dynamic constraints of the robot and connect any two points in a Euclidean search space with a straight line. 
However, a general kino-dynamic problem requires the solution of a two-point boundary value problem (TPBVP), also called the "local steering" problem, for optimally connecting any two states. 
Karaman and Frazzoli extended the RRT* algorithm for kino-dynamic planning by incorporating such steering functions in \cite{karaman2010optimal}.    
Perez et al~\cite{perez2012lqr} linearized the system dynamics and solved the infinite-horizon linear quadratic regulator (LQR) problem to obtain a locally optimal steering procedure.
The kino-dynamic RRT* algorithm \cite{webb2013kinodynamic} penalizes the control effort and the trajectory duration while connecting any two states. 
The authors of \cite{webb2013kinodynamic} solve a fixed final state, free final time, optimal control problem for linear time invariant (LTI) systems to derive a steering function. A kino-dynamic version of FMT* is presented in \cite{schmerling2015optimal}.
Note that these algorithms rely on the availability of a local steering module to ensure asymptotic optimality. 
However, developing such computationally efficient TPBVP solvers may not be possible for many cases.
The GR-FMT algorithm \cite{hwan2015optimal} proposes a local steering method based on polynomial basis functions and segmentation for controllable linear systems.
The recently introduced Stable Sparse RRT (SST) and SST* \cite{li2015sparse} algorithms guarantee asymptotic optimality, while having access only to a forward propagation model of the system's dynamics. This eliminates the need for TPBVP solvers.
The SST procedure promotes the propagation of states with good path costs and performs a selective pruning operation to keep the number of stored nodes small.
\begin{figure}
	\centering
	\includegraphics[width=0.7\columnwidth]{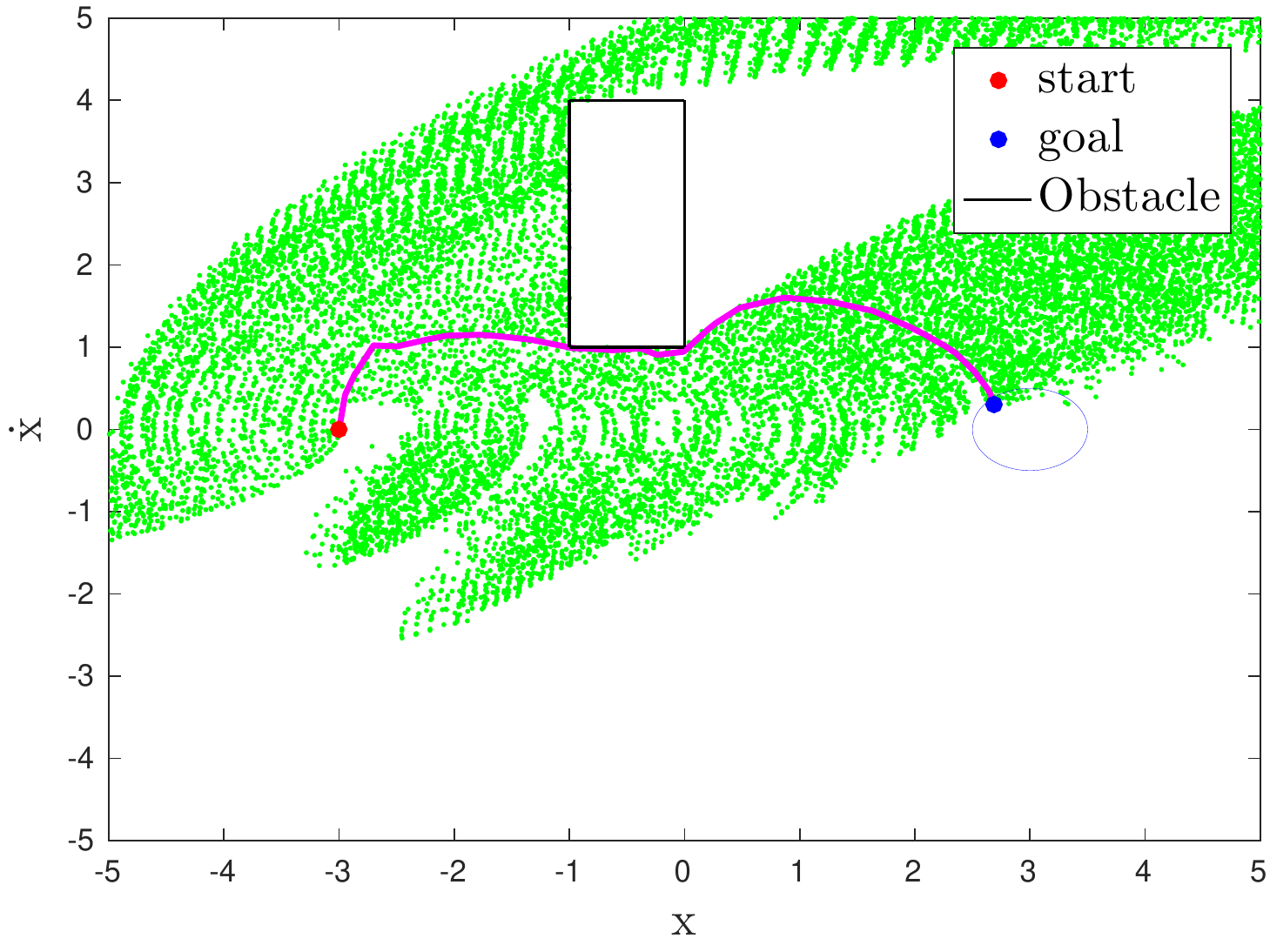}
	\includegraphics[width=0.7\columnwidth]{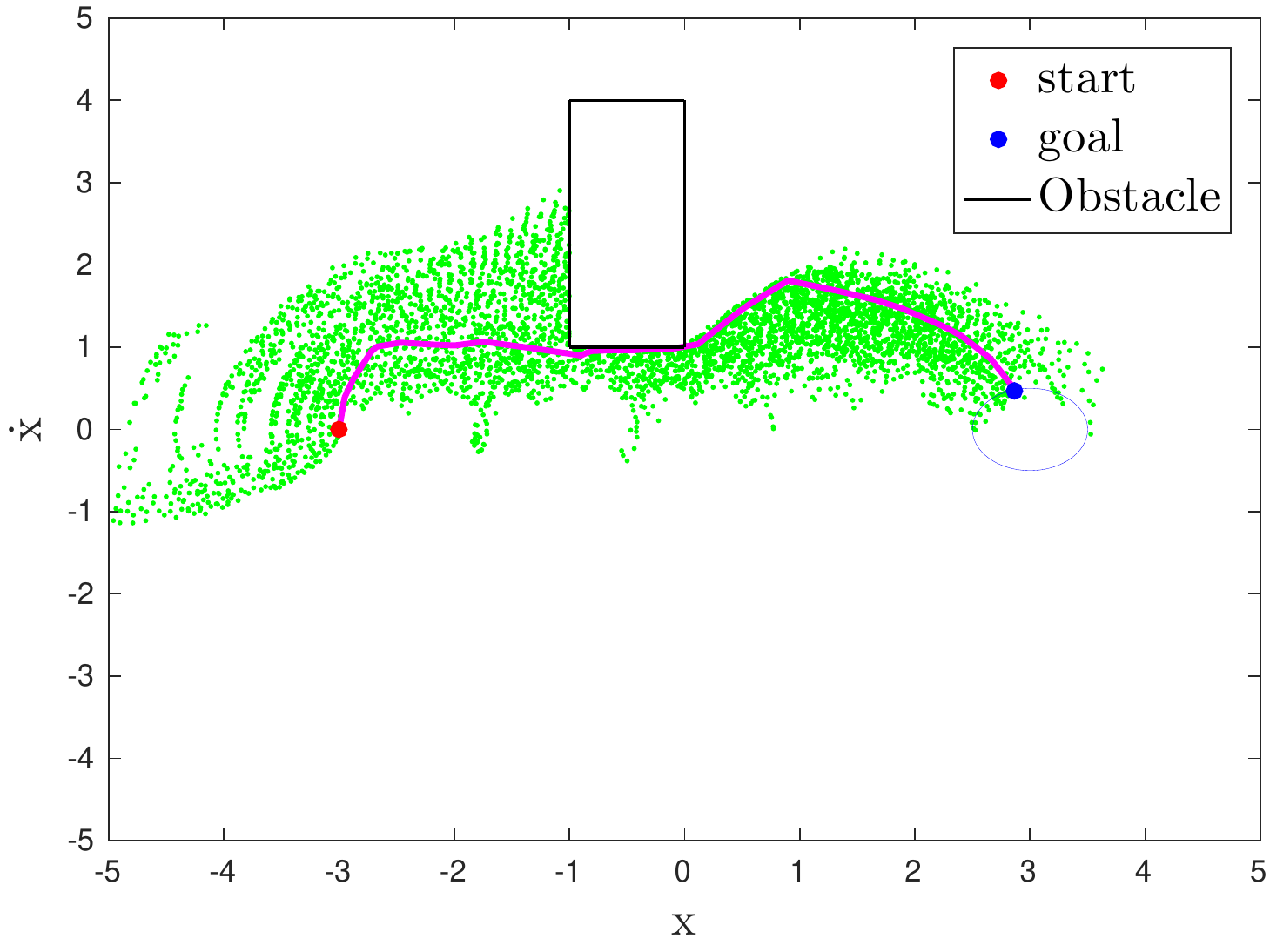}
	\caption{Time-optimal planning for a 2D system using the SST algorithm with uniform exploration (top) and the proposed strategy (bottom). The tree vertices generated are represented in green. Using the proposed strategy leads to a focused search.}
	\label{fig:toy2d}
\end{figure}

While significant progress has been made in the area of sampling-based kino-dynamic planners, developing intelligent exploration strategies to complement them still remains a challenging problem. 
Uniform random sampling results in a rapid exploration of the search-space and is effective for finding a first solution.
However, after an initial solution is found, exploration can be focused on a subset of the search-space that can potentially further improve the current solution.
For the case of geometric, length-optimal planning, Gammell et al~\cite{gammell2018informed} introduced the ``$L_2$-Informed Set'' that contains all the points that can \textit{potentially} improve the current solution. This set is a prolate hyper-spheroid with focii at the start and the goal states and its transverse diameter is equal to the current best solution cost. 
The direct Informed Sampling (IS) technique proposed in~\cite{gammell2018informed} provides a scalable approach to focus search, and shows dramatic convergence improvements in higher dimensions compared to the other state-of-the-art heuristic methods. 

However, as discussed in \cite{kunz2016hierarchical}, \cite{yi2018generalizing} deriving a parameterized representation or direct sampling of such Informed Sets for systems with differential constraints is a challenging problem. 
In this work, we propose an analogue to the Informed Set for the case of \textit{time-optimal} kino-dynamic planning using ideas from reachability analysis \cite{bansal2017hamilton,kurzhanskiy2010computation}.

Given a feasible (but perhaps sub-optimal) solution trajectory with time cost $T>0$, we define a Time-Informed Set (TIS) as the set that contains all the trajectories with time cost less than or equal to $T$. 
The planner can thus avoid redundant exploration outside the TIS.
The proposed exploration algorithm can be applied to a variety of systems, even if a tractable TPBVP solver may not be available. 

\section{RELATED WORK}

Prior work on intelligent exploration, such as \cite{akgun2011sampling,gammell2018informed,arslan2015machine,ichter2018learning} utilized heuristics and ideas from deep learning to improve the performance of sampling-based planners.  
The Informed SST (iSST) algorithm~\cite{littlefield2018informed} also leverages heuristics to guide search for kino-dynamic planning. DIRT~\cite{littlefield2018efficient} uses dominance informed regions along with heuristics to balance exploration and exploitation. 
However, iSST and DIRT may be ineffective in focusing the search for the cases where a good heuristic function is unavailable.

Concepts from reachability analysis have also been used for guiding exploration in sampling-based kino-dynamic planning.
Shkolnik et al~\cite{shkolnik2009reachability} used reachable sets in their RG-RRT algorithm to shape the Voronoi bias so as to find a feasible solution quickly.    
A discretized representation of the reachable space is proposed in \cite{pendleton2017numerical} to be used for sampling and nearest neighbor search. 
Chiang et al~\cite{chiang2019rl} trained an obstacle-aware time-to-reach (TTR) reachability estimator network to guide the RRT search process. 
However, the above techniques do not focus search on a subset of the search space based on current solution cost, which can lead to redundant exploration.

The algorithms proposed in~\cite{kunz2016hierarchical} and \cite{yi2018generalizing} are most relevant to the current work, as they address the problem of Informed Sampling for kino-dynamic motion planning.
Kunz et al~\cite{kunz2016hierarchical} proposed a hierarchical rejection sampling (HRS) method to generate informed samples for higher-dimensional systems. HRS essentially is a "bottom up" procedure that generates samples along the individual dimensions and combines them.
An accept/reject decision is taken for each partial sample until a complete sample in the informed set is generated. 
Yi et al~\cite{yi2018generalizing} proposed a Hit-and-Run Markov Chain Monte-Carlo (HNR-MCMC) algorithm to improve the sampling efficiency compared to HRS.
Given a previous sample in the Informed Set, the HNR-MCMC first samples a random direction and then uses rejection sampling to find the largest step-size so that the new sample lies inside the Informed Set. 
However, both HRS and HNR-MCMC assume availability of a local steering function, that gives the optimal cost (or a good under-estimate) connecting any two states. 
For minimum time problems, the above two methods can only be applied to specific systems, such as the double integrator. In this work, we address this issue by using ideas from reachability analysis to define the TIS. The proposed algorithm can thus be applied to a wide variety of systems.  

In the following sections, the time-optimal kino-dynamic motion planning problem is first defined, followed by the definition of the TIS and some theoretical results. 
The proposed exploration algorithm is then delineated along with some results from a series of numerical experiments. 

\section{PROBLEM DEFINITION}

Let $\mathcal{X} \subset \mathbb{R}^n $, $n \geq 2$ and $\mathcal{U} \subset \mathbb{R}^m $, $m \geq 1$  be compact sets representing the state and admissible control spaces respectively. Let $\mathcal{X}_\mathrm{obs} \subset \mathcal{X}$ denote the obstacle space and  $ \mathcal{X}_\mathrm{free}= \mathrm{cl}(\mathcal{X}\setminus \mathcal{X}_\mathrm{obs}) $ denote the free space.
Here, $\mathrm{cl}(S)$ represents the closure of the set $S \subset \mathbb{R}^n$.  
Let $\lambda( S )$ denote the Lebesgue measure of the set $S \subset \mathbb{R}^n$.
Let $\textbf{x}_\mathrm{s} \in \mathcal{X}_\mathrm{free}$ denote the initial state and let $\mathcal{X}_\mathrm{g} \subset \mathcal{X}_\mathrm{free}$ represent the goal set.
The time-optimal motion planning problem can be defined as follows:
\begin{subequations}
	\label{eq:TimeOptimalProblem}
	\begin{align}
	T^*= & \min_{\textbf{u}} ~ T \\
	\text{subject to:} & ~ \dot{\textbf{x}}(t) = f(\textbf{x}(t),\textbf{u}(t)), \label{eq:dynamics}\\
	& ~ \textbf{x}(0)=\textbf{x}_\mathrm{s}, ~\textbf{x}(T)\in \mathcal{X}_\mathrm{g}, \\
	& \textbf{x}(t) \in \mathcal{X}_\mathrm{free},~ \textbf{u}(t) \in \mathcal{U} ~~~ \text{for all} ~ t \in [0,T]. \label{eq:feasibility}
	\end{align}
\end{subequations}
Sampling-based algorithms solve the above problem by incrementally building a tree $\mathcal{T}=(V,E)$ that encodes the connectivity between a finite set of vertices $V \subset \mathcal{X}_\mathrm{free}$ with edges $E \subseteq V \times V$. 
The trajectory and the cost representing an edge are calculated either by using a steering function or by forward propagation of the system model using random controls.
\begin{figure*}
	\centering
	\includegraphics[width=0.65\columnwidth]{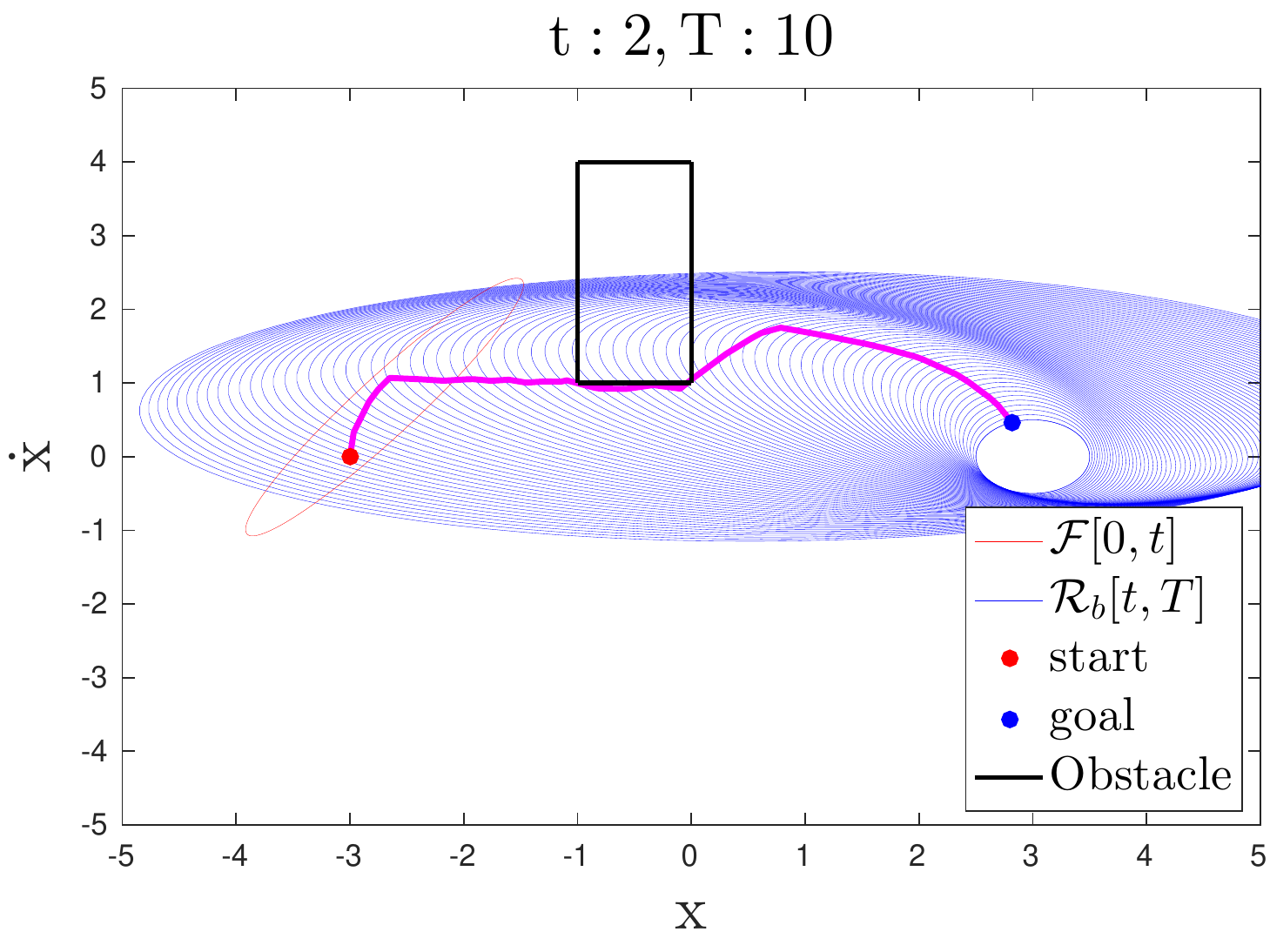}
	\includegraphics[width=0.65\columnwidth]{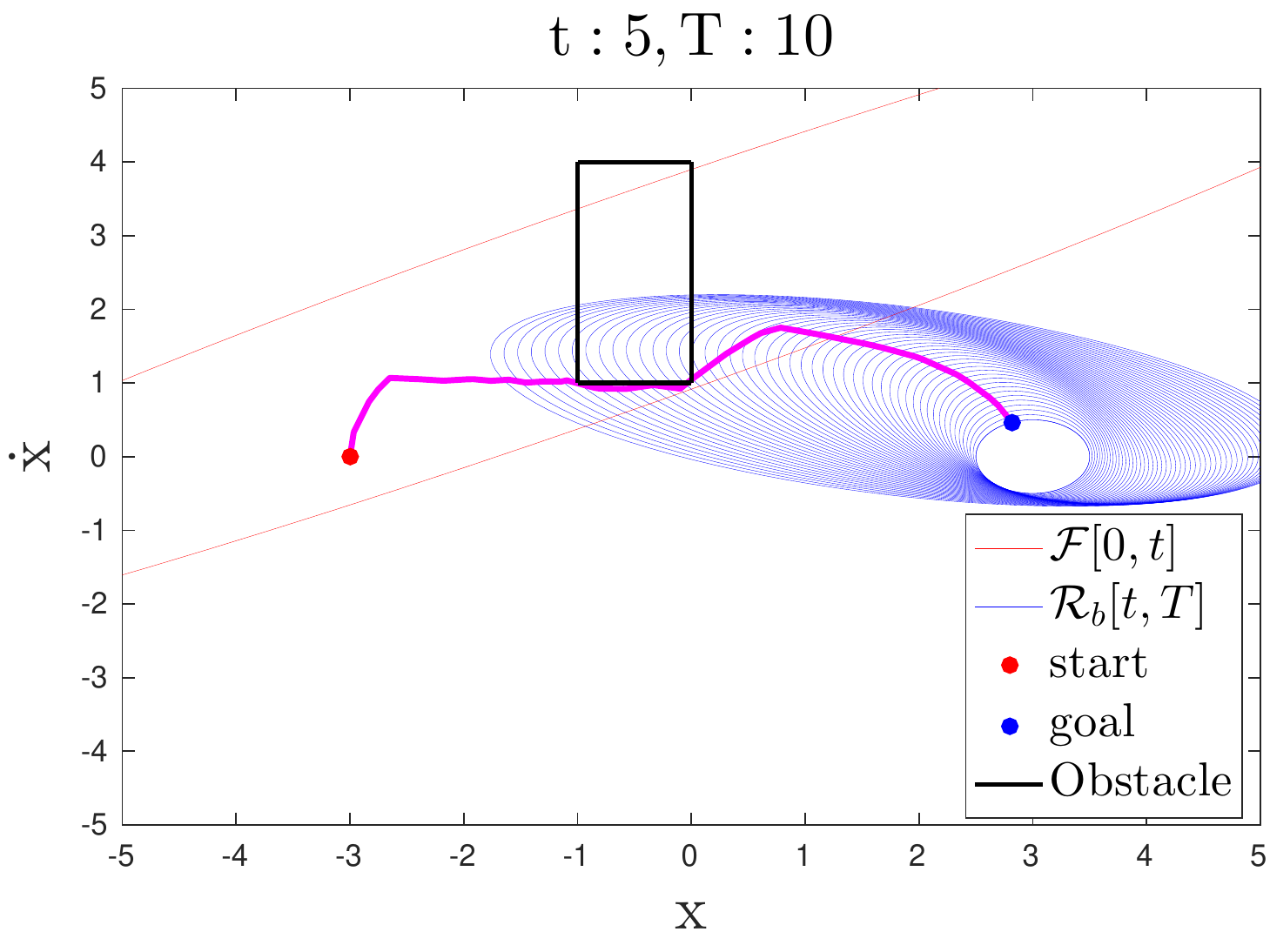}
	\includegraphics[width=0.65\columnwidth]{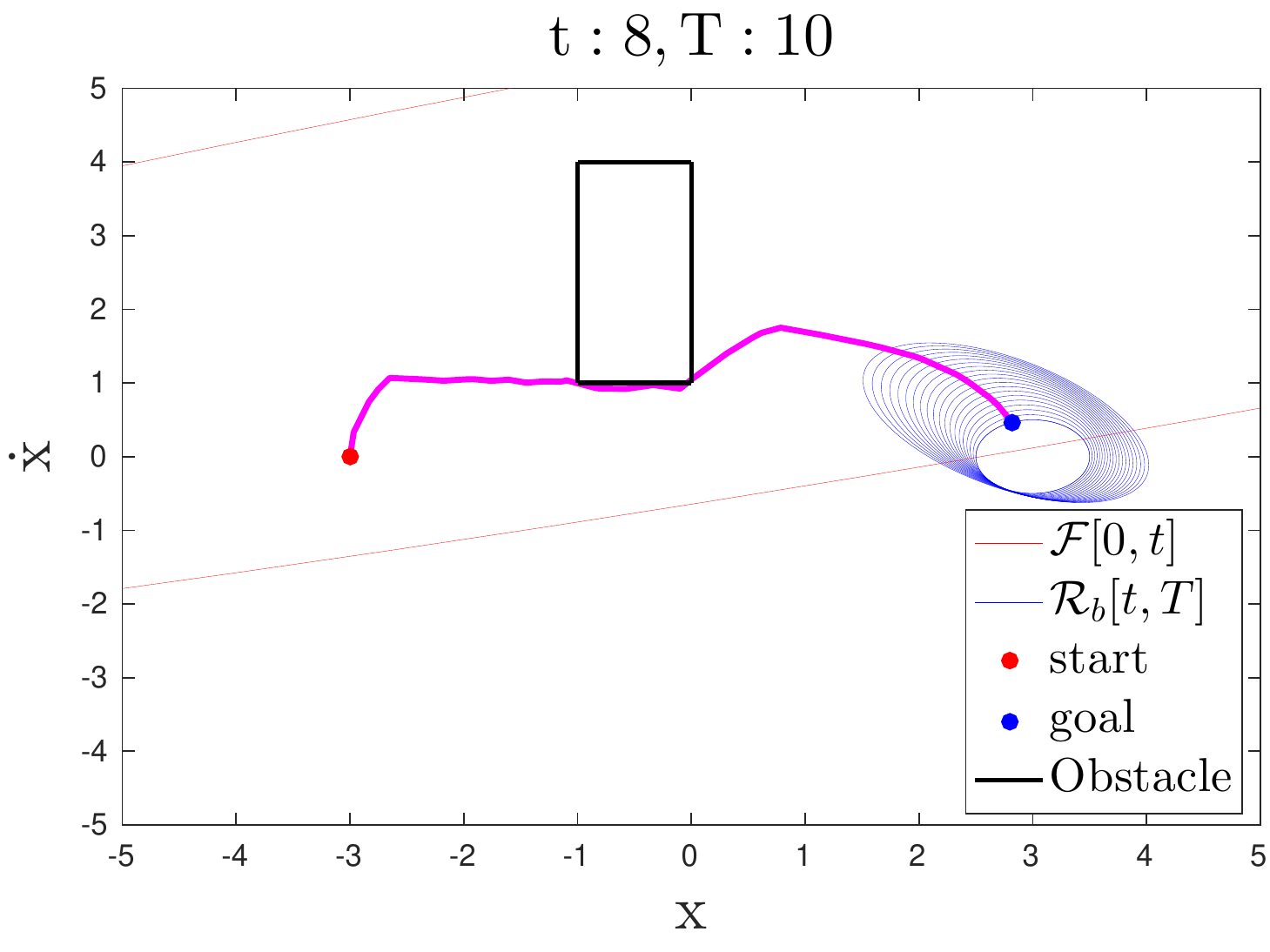}
	\caption{Evolution of the forward reachable set $\mathcal{F}[0,t]$ and the backward reachable tube $\mathcal{R}_b[t,T]$ for the 2D Toy system at time $t=2,5,8$. 
	Note that $\Omega(T)$ comprises of the intersections $\mathcal{F}[0,t] \cap \mathcal{R}_b[t,T]$.}
	\label{fig:toy2d_reachsets}
\end{figure*}

\section{TIME-INFORMED SET}

Consider the set of points that can be reached at time $t$, starting from $\textbf{x}_\mathrm{s}$ at time $t_0<t$, using admissible controls,
\begin{equation}
\label{eq:forwardreachsetatt}
\begin{aligned}
\mathcal{X}_f[t_0,t]=\{\textbf{z}\in \mathcal{X} ~ | ~ \exists ~ \textbf{u}:[t_0,t] \rightarrow \mathcal{U},~ \textbf{x}:[t_0,t] \rightarrow \mathcal{X},\\
\text{s.t} ~  \textbf{x}(t_0)=\textbf{x}_\mathrm{s},\textbf{x}(t)=\textbf{z}, \dot{\textbf{x}}(t) = f(\textbf{x}(t),\textbf{u}(t)) \}.
\end{aligned}
\end{equation}
Let $\mathcal{F}[t_0,t]$ be an over-approximation of $\mathcal{X}_f[t_0,t]$, i.e., $\mathcal{X}_f[t_0,t] \subseteq \mathcal{F}[t_0,t]$.
Similarly, the set of points starting at time $t$ that can reach $\mathcal{X}_\mathrm{g}$ at time $t_f>t$ using admissible controls can be defined as,
\begin{equation}
\label{eq:backwardreachsetatt}
\begin{aligned}
\mathcal{X}_b[t,t_f]=\{\textbf{z}\in \mathcal{X} ~ | ~ \exists ~\textbf{u}:[t,t_f] \rightarrow \mathcal{U},~ \textbf{x}:[t,t_f] \rightarrow \mathcal{X},\\
\text{s.t} ~ \textbf{x}(t)=\textbf{z},\textbf{x}(t_f)\in \mathcal{X}_\mathrm{g}, \dot{\textbf{x}}(t) = f(\textbf{x}(t),\textbf{u}(t)) \}.
\end{aligned}
\end{equation}
Let $\mathcal{B}[t,t_f]$ be an over-approximation of $\mathcal{X}_b[t,t_f]$, i.e., $\mathcal{X}_b[t,t_f] \subseteq \mathcal{B}[t,t_f]$. Note that state constraints ensuring collision-free trajectories are not imposed while defining the above sets.
The (over-approximated) backward reachability \textit{tube} over the interval $[t,t_f]$ includes the set of all points starting at time $t$, that can reach $\mathcal{X}_\mathrm{g}$ at any time $\tau \in [t,t_f]$
\begin{equation}
\label{eq:backwardreachableset}
\mathcal{R}_b[t,t_f]=\bigcup_{t \leq \tau \leq t_f} \mathcal{B}[t,\tau].
\end{equation}
Assume that a feasible (perhaps sub-optimal) solution to problem (\ref{eq:TimeOptimalProblem}) with time cost $T>0$ is available. Consider the following definition of the Time-Informed Set (TIS)
\begin{equation}
\label{eq:omegaT}
\Omega(T)=\bigcup_{0 \leq t \leq T} \mathcal{F}[0,t] \cap \mathcal{R}_b[t,T].
\end{equation}
Intuitively, $\Omega(T)$ contains all the points $\textbf{x} \in \mathcal{X}$ that can be reached from $\textbf{x}_\mathrm{s}$ at a time $t$, where $0 \leq t \leq T$, i.e., $\textbf{x} \in \mathcal{F}[0,t]$ and then can reach the goal at time $\tau$, $t\leq \tau \leq T$, i.e., $\textbf{x} \in \mathcal{R}_b[t,T]$.
Please see Fig.~\ref{fig:toy2d_reachsets} and the attached video\footnote{\url{https://www.youtube.com/watch?v=dnMHb7uFEGw}} for a visualization of $\Omega(T)$.

The following theoretical arguments formally prove that given a sub-optimal solution with time cost $T $, the set $\Omega(T)$ contains all the trajectories with time cost $T$ or less.
\begin{lemma}
	\label{lemma:ftintersectbt}
	Given a feasible solution with cost $T>0$, $\mathcal{F}[0,t] \cap \mathcal{B}[t,T] \neq \varnothing$ for all $t \in  [0,T]$.
\end{lemma}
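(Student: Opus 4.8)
The plan is to prove the lemma by exhibiting an explicit witness point lying in the intersection, namely the state of the given feasible trajectory sampled at the instant $t$. Since both over-approximated reachable sets contain their exact counterparts ($\mathcal{X}_f[0,t] \subseteq \mathcal{F}[0,t]$ and $\mathcal{X}_b[t,T] \subseteq \mathcal{B}[t,T]$), it suffices to show that a single point belongs to $\mathcal{X}_f[0,t] \cap \mathcal{X}_b[t,T]$; the inclusions then carry it into $\mathcal{F}[0,t] \cap \mathcal{B}[t,T]$.

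Concretely, I would first invoke the hypothesis: let $\textbf{x}:[0,T] \to \mathcal{X}$ and $\textbf{u}:[0,T] \to \mathcal{U}$ be the control--trajectory pair realizing the feasible solution, so that $\textbf{x}(0)=\textbf{x}_\mathrm{s}$, $\textbf{x}(T)\in\mathcal{X}_\mathrm{g}$, and $\dot{\textbf{x}}(\tau)=f(\textbf{x}(\tau),\textbf{u}(\tau))$ for all $\tau\in[0,T]$. Fix an arbitrary $t\in[0,T]$ and define the candidate point $\textbf{z}:=\textbf{x}(t)$. To place $\textbf{z}$ in $\mathcal{X}_f[0,t]$, I would use the restriction of $(\textbf{x},\textbf{u})$ to the subinterval $[0,t]$ as the existential witness in definition~(\ref{eq:forwardreachsetatt}): this segment starts at $\textbf{x}_\mathrm{s}$, ends at $\textbf{z}$, and satisfies the dynamics, so $\textbf{z}\in\mathcal{X}_f[0,t]\subseteq\mathcal{F}[0,t]$. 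Symmetrically, to place $\textbf{z}$ in $\mathcal{X}_b[t,T]$, I would use the restriction to $[t,T]$ as the witness in definition~(\ref{eq:backwardreachsetatt}): this segment starts at $\textbf{z}$ at time $t$, satisfies the dynamics, and ends at $\textbf{x}(T)\in\mathcal{X}_\mathrm{g}$, so $\textbf{z}\in\mathcal{X}_b[t,T]\subseteq\mathcal{B}[t,T]$. Combining the two memberships gives $\textbf{z}\in\mathcal{F}[0,t]\cap\mathcal{B}[t,T]$, hence the intersection is nonempty, and since $t$ was arbitrary the claim holds for all $t\in[0,T]$.

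I do not expect a genuine obstacle here; the result is essentially a definitional witness argument, and its only real content is checking that the two restricted trajectory segments satisfy the existential predicates verbatim. The one subtlety worth stating explicitly is that the reachable-set definitions deliberately omit the state (collision-avoidance) constraint~(\ref{eq:feasibility}); this works in our favor, since the feasible trajectory is collision-free and therefore \emph{a fortiori} meets the weaker, unconstrained requirements of~(\ref{eq:forwardreachsetatt}) and~(\ref{eq:backwardreachsetatt}). I would also verify that the inclusions point in the needed direction: because $\mathcal{F}$ and $\mathcal{B}$ are \emph{over}-approximations, the exact witness $\textbf{z}$ is retained under each enlargement, so no tightness of the approximation is required.
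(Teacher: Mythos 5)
Your proof is correct and follows essentially the same argument as the paper: both take the feasible trajectory, sample it at the instant $t$, and observe that this point witnesses membership in both $\mathcal{F}[0,t]$ and $\mathcal{B}[t,T]$. Your version is in fact slightly more careful than the paper's, since you explicitly route the argument through the exact sets $\mathcal{X}_f[0,t]$ and $\mathcal{X}_b[t,T]$ and the over-approximation inclusions, steps the paper leaves implicit.
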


\begin{proof}
	Consider the solution trajectory with time cost $T$, $\zeta:[0,T] \rightarrow \mathcal{X}$, where $\zeta(0)=\textbf{x}_\mathrm{s}$ and $\zeta(T)=\textbf{x}_\mathrm{g}$.
	For any point $\textbf{x}$ on this trajectory, there exists $ t \in [0,T]$ such that $\textbf{x}=\zeta(t)$. 
	Thus, $\textbf{x} \in \mathcal{F}[0,t]$ and $\textbf{x} \in \mathcal{B}[t,T]$. It follows that, $\textbf{x} \in \mathcal{F}[0,t] \cap \mathcal{B}[t,T]$. Therefore, $\mathcal{F}[0,t] \cap \mathcal{B}[t,T] \neq \varnothing$.
\end{proof}
\begin{lemma}
	\label{lemma:rbsubset}
	$\mathcal{R}_b[t,T_1] \subset \mathcal{R}_b[t,T_2]$ for any $T_2>T_1>t>0$.
\end{lemma}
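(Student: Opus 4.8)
The plan is to reduce the set inclusion to a statement about the index sets of the two unions. Recall from \eqref{eq:backwardreachableset} that $\mathcal{R}_b[t,T_1]=\bigcup_{t\le\tau\le T_1}\mathcal{B}[t,\tau]$ and $\mathcal{R}_b[t,T_2]=\bigcup_{t\le\tau\le T_2}\mathcal{B}[t,\tau]$, so that the only difference between the two expressions is the range over which the terminal time $\tau$ is allowed to vary. This suggests that the lemma is, at its core, a monotonicity statement about $\mathcal{R}_b[\cdot,\cdot]$ in its upper time limit.

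First I would observe that, since $T_2>T_1$, the index interval $[t,T_1]$ is a proper subset of $[t,T_2]$. Because a union taken over a smaller index set involves only a subcollection of the sets $\mathcal{B}[t,\tau]$ appearing in the larger union, any point of $\mathcal{R}_b[t,T_1]$ lies in some $\mathcal{B}[t,\tau]$ with $\tau\in[t,T_1]\subseteq[t,T_2]$, and is therefore contained in $\mathcal{R}_b[t,T_2]$. This yields the inclusion $\mathcal{R}_b[t,T_1]\subseteq\mathcal{R}_b[t,T_2]$ essentially for free, with no appeal to the dynamics beyond the definitions themselves.

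The step I expect to carry the real content is upgrading this to the strict inclusion $\subset$ claimed in the statement. For strictness I would need to exhibit a point $\textbf{z}\in\mathcal{R}_b[t,T_2]\setminus\mathcal{R}_b[t,T_1]$, that is, a state from which (starting at time $t$) the goal $\mathcal{X}_\mathrm{g}$ can be reached at some terminal time $\tau\in(T_1,T_2]$ but at no terminal time in $[t,T_1]$. Establishing this requires something beyond pure set algebra: a non-degeneracy or controllability-type property guaranteeing that the backward tube genuinely grows as the horizon is extended, so that the family $\mathcal{B}[t,\tau]$ sweeps out new states as $\tau$ increases past $T_1$. I would therefore either invoke such a mild assumption on $f$ and $\mathcal{X}_\mathrm{g}$, or, if the intended reading is merely containment (as is common when $\subset$ is used loosely for $\subseteq$), note that the nontrivial content of the lemma is precisely the index-monotonicity argument above and treat the strict case as generic. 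The main obstacle, and the only place where the argument is more than bookkeeping, is thus justifying strictness rather than the inclusion itself.
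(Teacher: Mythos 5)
Your argument is essentially the paper's own proof: the paper simply splits $\bigcup_{t \leq \tau \leq T_2} \mathcal{B}[t,\tau]$ into the union over $[t,T_1]$ and the union over $[T_1,T_2]$, which is exactly your index-monotonicity observation in a slightly different guise. Your remark on strictness is also well taken --- the paper writes $\subset$ but its proof, like yours, only establishes $\subseteq$, so reading the symbol as loose notation for containment is the correct interpretation.
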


\begin{proof}
	Note from the definition (\ref{eq:backwardreachableset}),
	\begin{equation*}
	\begin{aligned}
	\mathcal{R}_b[t,T_2]&=\bigcup_{t \leq \tau \leq T_2} \mathcal{B}[t,\tau]\\
	&= \bigg( \bigcup_{t \leq \tau \leq T_1} \mathcal{B}[t,\tau] \bigg) \bigcup \bigg( \bigcup_{T_1\leq \tau \leq T_2} \mathcal{B}[t,\tau] \bigg).
	\end{aligned}	
	\end{equation*}	
	Since $\mathcal{R}_b[t,T_1] = \bigcup_{t \leq \tau \leq T_1} \mathcal{B}[t,\tau] $ it follows that $\mathcal{R}_b[t,T_1] \subset \mathcal{R}_b[t,T_2]$.
\end{proof}
\begin{theorem}
	\label{theorem:omegacontiansT}
	The set $\Omega(T)$ contains all trajectories with time cost exactly $T$. 
\end{theorem}
\begin{proof}
	Consider any solution trajectory $\zeta:[0,T] \rightarrow \mathcal{X}$ with time cost $T>0$, where $\zeta(0)=\textbf{x}_\mathrm{s}$, $\zeta(T)=\textbf{x}_\mathrm{g}$. For any point $\textbf{x}$ on this trajectory, there exists $ t \in [0,T]$ such that $\textbf{x}=\zeta(t)$. Then, $\textbf{x} \in \mathcal{F}[0,t]$ and $\textbf{x} \in \mathcal{B}[t,T]$. 
	This implies that $\textbf{x} \in \mathcal{F}[0,t] \cap \mathcal{B}[t,T]$ and hence $\textbf{x} \in \mathcal{F}[0,t] \cap \mathcal{R}_b[t,T]$. Thus, $ \textbf{x} \in \Omega(T)$. Since $t$ is arbitrary, if follows that $\zeta (t) \in \Omega(T)$ for all $t \in [0,T]$. 
\end{proof}
\begin{theorem}
	\label{theorem:omegaincreasing}
	$ \Omega(T_1) \subset \Omega(T_2)$ for any $T_2>T_1>0$.
\end{theorem}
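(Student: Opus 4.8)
The plan is to establish the inclusion pointwise: I would fix an arbitrary $\textbf{x} \in \Omega(T_1)$ and produce a witness showing $\textbf{x} \in \Omega(T_2)$. By the definition in (\ref{eq:omegaT}), membership $\textbf{x} \in \Omega(T_1)$ supplies a time index $t \in [0,T_1]$ with $\textbf{x} \in \mathcal{F}[0,t] \cap \mathcal{R}_b[t,T_1]$. My intention is to reuse this same $t$ as the witnessing index for $\Omega(T_2)$, since nothing forces us to shift it.

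Two things then need checking. First, the forward-reachable component requires no work at all: the set $\mathcal{F}[0,t]$ depends only on $t$ and not on the horizon $T$, so $\textbf{x} \in \mathcal{F}[0,t]$ persists verbatim; moreover, because $T_2 > T_1$ we have $t \in [0,T_1] \subseteq [0,T_2]$, so $t$ is a legitimate index in the union that defines $\Omega(T_2)$. Second, I must upgrade the backward-tube component from horizon $T_1$ to horizon $T_2$. This is precisely where Lemma \ref{lemma:rbsubset} does the work: it gives $\mathcal{R}_b[t,T_1] \subset \mathcal{R}_b[t,T_2]$, and combined with $\textbf{x} \in \mathcal{R}_b[t,T_1]$ this yields $\textbf{x} \in \mathcal{R}_b[t,T_2]$. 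Hence $\textbf{x} \in \mathcal{F}[0,t] \cap \mathcal{R}_b[t,T_2]$, and taking the union over $t \in [0,T_2]$ places $\textbf{x} \in \Omega(T_2)$. As $\textbf{x}$ was arbitrary, the inclusion follows.

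The one delicate point — and the step I would handle with the most care — is the boundary index $t=0$, which lies just outside the strict hypothesis $t > 0$ of Lemma \ref{lemma:rbsubset}. Rather than invoke the lemma there, I would reproduce its union-splitting argument directly, observing that
\[
\mathcal{R}_b[0,T_2] = \Bigl(\bigcup_{0 \le \tau \le T_1} \mathcal{B}[0,\tau]\Bigr) \cup \Bigl(\bigcup_{T_1 \le \tau \le T_2} \mathcal{B}[0,\tau]\Bigr) \supseteq \mathcal{R}_b[0,T_1],
\]
an inclusion that needs only $0 \le T_1 \le T_2$. This closes the gap for the endpoint case and makes the monotonicity argument valid for every witnessing index $t \in [0,T_1]$.

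Overall the argument is essentially a monotonicity bookkeeping exercise, with no real analytic obstacle: the substance is entirely carried by Lemma \ref{lemma:rbsubset}, and the only thing demanding attention is making sure the chosen witness $t$ remains admissible for the larger horizon and that the $t=0$ edge is not overlooked. I would therefore keep the write-up short, emphasizing the reuse of the index $t$ and the monotonicity of the backward tube, and treating the $t=0$ case explicitly to match the paper's stated hypothesis range.
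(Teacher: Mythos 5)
Your proof is correct and follows essentially the same route as the paper: both arguments reduce the claim to Lemma~\ref{lemma:rbsubset} applied over the witnessing indices $t \in [0,T_1]$, the paper phrasing this as a union decomposition of $\Omega(T_2)$ split at $T_1$ while you phrase it pointwise via reuse of the witness $t$. Your explicit handling of the boundary index $t=0$ (where the lemma's stated hypothesis $t>0$ does not apply) is a small rigor improvement over the paper, which invokes the lemma across all of $[0,T_1]$ without comment.
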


\begin{proof}
Recall that the set $\Omega(T_2)$ is defined by
	\begin{equation}
	\Omega(T_2)=\bigcup_{0 \leq t \leq T_2} \mathcal{F}[0,t] \cap \mathcal{R}_b[t,T_2],
	\end{equation}	
which can be re-written as
	\begin{equation*}
	\begin{aligned}
\Omega(T_2)	&=\bigg( \bigcup_{0 \leq t \leq T_1}\mathcal{F}[0,t] \cap \mathcal{R}_b[t,T_2] \bigg)
	\bigcup\\
	&\bigg( \bigcup_{T_1 \leq t  \leq T_2} \mathcal{F}[0,t] \cap \mathcal{R}_b[t,T_2] \bigg).
	\end{aligned}
	\end{equation*}	
	From Lemma~\ref{lemma:rbsubset}, it follows that $\mathcal{R}_b[t,T_1] \subset \mathcal{R}_b[t,T_2]$.
	Hence, $\Omega(T_1)=\bigcup_{0 \leq t \leq T_1}\mathcal{F}[0,t] \cap \mathcal{R}_b[t,T_1] \subset \bigcup_{0 \leq t \leq T_1}\mathcal{F}[0,t] \cap \mathcal{R}_b[t,T_2]$.
	Thus, $ \Omega(T_1) \subset \Omega(T_2)$.
\end{proof}
\begin{corollary}
	\label{theorem:omegacontainsT}
	Given a solution to (\ref{eq:TimeOptimalProblem}) with time cost $T$, the set $\Omega(T)$ defined in (\ref{eq:omegaT}) contains all the trajectories with cost less than or equal to $T$. 
	Conversely, any trajectory that is not contained inside $\Omega(T)$ has time cost $T^{'} >T$
\end{corollary}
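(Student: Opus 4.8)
The plan is to obtain the corollary as an almost immediate consequence of the two preceding theorems, with no new machinery required. The key observation is that Theorem~\ref{theorem:omegacontiansT} already pins down trajectories of cost \emph{exactly} $T$, while Theorem~\ref{theorem:omegaincreasing} supplies the monotonicity $\Omega(T_1) \subset \Omega(T_2)$ that lets me transfer this containment upward from a smaller cost value to $T$. The whole argument is therefore a repackaging of these two results.

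First I would prove the forward statement. Take any feasible trajectory $\zeta:[0,T'] \to \mathcal{X}$ with $\zeta(0)=\textbf{x}_\mathrm{s}$, $\zeta(T') \in \mathcal{X}_\mathrm{g}$, and time cost $T'$ satisfying $0 < T' \le T$. Applying Theorem~\ref{theorem:omegacontiansT} at the value $T'$ shows that every point of $\zeta$ lies in $\Omega(T')$. If $T'=T$ there is nothing further to do. If $T'<T$, then Theorem~\ref{theorem:omegaincreasing} gives $\Omega(T') \subset \Omega(T)$, so the image of $\zeta$ lies in $\Omega(T)$ as well. Since $\zeta$ was arbitrary, $\Omega(T)$ contains every trajectory of cost at most $T$.

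The converse is simply the contrapositive of the forward statement, so I would dispatch it in one line. Suppose a trajectory is \emph{not} contained in $\Omega(T)$. Were its cost $T'$ to satisfy $T' \le T$, the forward argument would place its image inside $\Omega(T)$, a contradiction; hence $T' > T$.

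I do not anticipate a genuine obstacle here, since all the substance lives in Theorems~\ref{theorem:omegacontiansT} and~\ref{theorem:omegaincreasing}. The only points needing a little care are bookkeeping ones: the boundary case $T'=T$, where Theorem~\ref{theorem:omegaincreasing} does not literally apply because it assumes strict inequality, but where the containment holds trivially through the equality $\Omega(T')=\Omega(T)$; and the implicit convention that ``$\Omega(T)$ contains a trajectory'' is to be read as ``$\Omega(T)$ contains the image $\zeta([0,T'])$,'' matching exactly the pointwise phrasing established in the proof of Theorem~\ref{theorem:omegacontiansT}.
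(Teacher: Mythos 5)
Your proof is correct and follows essentially the same route as the paper's: apply Theorem~\ref{theorem:omegacontiansT} at the trajectory's own cost $T'$ and then lift the containment to $\Omega(T)$ via the monotonicity in Theorem~\ref{theorem:omegaincreasing}. Your version is slightly more careful than the paper's -- you handle the boundary case $T'=T$ explicitly and spell out the converse as the contrapositive, which the paper leaves implicit -- but these are refinements of the same argument, not a different approach.
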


\begin{proof}
	From Theorem \ref{theorem:omegacontiansT}, it follows that $\Omega(T)$ contains all trajectories with time cost exactly $T$. 
	Theorem \ref{theorem:omegaincreasing} implies that $\Omega(T)$ is a superset of all the sets containing trajectories with time cost less than $T$. 
	Thus, $\Omega(T)$ also contains all the trajectories with cost less than or equal to $T$.
\end{proof}

After a, perhaps sub-optimal, solution with cost $T$ is found, any state that lies on an improved solution path necessarily lies inside the TIS.
The search can thus be focused onto the TIS.  
This can avoid redundant computations and accelerate convergence, especially for higher dimensional problems.

\section{TIME-INFORMED vs $L_2$-INFORMED SET}

This section examines the relationship between the TIS defined in (\ref{eq:omegaT}) and the $L_2$-Informed Set from~\cite{gammell2018informed} for a special case of a linear single integrator system. 
The purpose of this investigation is to show that the TIS is a generalization of the $L_2$-Informed Set approach in \cite{gammell2018informed}.
Consider the case of single-integrator dynamics $\dot{\textbf{x}}(t)=\textbf{u}(t)$, for which
\begin{equation}
\label{eq:singleIntReachSets}
\begin{aligned}
\mathcal{F}[t_0,t]&=\{\textbf{x}\in \mathcal{X} \ | \|\textbf{x}-\textbf{x}_\mathrm{s}\|_2 \leq  u_\mathrm{max}(t-t_0)  \}\\
\mathcal{B}[t,t_f]&=\{\textbf{x}\in \mathcal{X} \ | \|\textbf{x}-\textbf{x}_\mathrm{g}\|_2 \leq u_\mathrm{max}(t_f-t) \}.
\end{aligned}
\end{equation}
Here, $\|.\|_2$ represents the $L_2$-norm.
As the set $\mathcal{U}$ is compact, there exists a $u_\mathrm{max}>0$, so that $\|\textbf{u}(t)\|_2\leq u_\mathrm{max}$ for all $t$.
Note that, for this special case, the forward and backward reachable sets defined in (\ref{eq:singleIntReachSets}) are concentric circles.  
Then, for a given $t<t_f$, we have $\mathcal{B}[t,t_f]=\mathcal{R}_b[t,t_f]$ and hence 
$\Omega(T)=\bigcup_{0 \leq t \leq T} \mathcal{F}[0,t] \cap \mathcal{B}[t,T]$. 
Thus, for any $\textbf{x} \in \Omega(T)$, we have $\|\textbf{x}-\textbf{x}_\mathrm{s}\|_2 \leq  u_\mathrm{max}t $ and $\|\textbf{x}-\textbf{x}_\mathrm{g}\|_2 \leq u_\mathrm{max}(T-t)$. 
Adding the two inequalities we get, 
\begin{equation}
\label{eq:l2TimeInfSet}
\Omega(T)= \{\textbf{x}\in \mathcal{X} \ | \ \|\textbf{x}-\textbf{x}_\mathrm{s}\|_2 + \|\textbf{x}-\textbf{x}_\mathrm{g}\|_2 \leq u_\mathrm{max}T \}
\end{equation}
The TIS in (\ref{eq:l2TimeInfSet}) in this case has the same prolate hyper-spheroid form as the $L_2$-Informed Set \cite{gammell2018informed}.
Thus, the TIS can be seen as a generalization of the $L_2$-Informed Set.

\section{TIME-INFORMED EXPLORATION}

Although obtaining the exact reachable sets defined in (\ref{eq:forwardreachsetatt}), (\ref{eq:backwardreachsetatt}) may not be computationally tractable, various techniques have been proposed to obtain tight over-approximations of these sets. 
These include application of polytopes and zonotopes~\cite{girard2006efficient}, ellipsoidal calculus~\cite{kurzhanskiy2010computation} and formulating reachability problem as a Hamilton-Jacobi-Bellman (HJB) PDE~\cite{bansal2017hamilton}.  
In this work, we use the ellipsoidal technique which provides a scalable framework for reachability analysis of robots with linear-affine dynamics. 
However, as discussed  later on, the HJB reachability formulation can be used to extend the algorithms proposed in this work for general cost-functions and non-linear systems. 

Consider the special case of linear kino-dynamic systems. Concretely, the constraint (\ref{eq:dynamics}) is $\dot{\textbf{x}}(t) = A\textbf{x}(t)+B\textbf{u}(t)$, with $A \in \mathbb{R}^{n \times n}$, $B \in \mathbb{R}^{n \times m}$.
Then, $\mathcal{X}_f[t_0,t]$ and $\mathcal{X}_b[t,t_f]$ can be defined as
\begin{equation}
\label{eq:fwdreachsetattlinear}
\begin{aligned}
\mathcal{X}_f[t_0,t]=\{\textbf{x}\in \mathcal{X} ~ | ~ \exists & ~ \textbf{u}:[t_0,t] \rightarrow \mathcal{U},\text{s.t} \\
\textbf{x}=\mathrm{e}^{A(t-t_0)}\textbf{x}_\mathrm{s}+ & \int_{t_0}^t \mathrm{e}^{A(t-\tau)}B\textbf{u}(\tau) ~ \dint \tau  \},\\
\mathcal{X}_b[t,t_f]=\{\textbf{x}\in \mathcal{X} ~ | ~ \exists & ~\textbf{u}:[t,t_f] \rightarrow \mathcal{U},\text{s.t} \\   \textbf{x}=\mathrm{e}^{-A(t_f-t)}\textbf{x}_\mathrm{g}-&\int_t^{t_f} \mathrm{e}^{-A(\tau-t)}B\textbf{u}(\tau) ~ \dint \tau  \}.
\end{aligned}
\end{equation}
Here, $\textbf{x}_\mathrm{g} \in \mathcal{X}_\mathrm{goal}$. 
A hyper-sphere over-approximation to the above sets can be constructed as follows \cite{girard2006efficient},
\begin{equation}
\label{eq:sphoverapproxattlinear}
\begin{aligned}
\mathcal{F}[t_0,t]&=\{\textbf{x}\in \mathcal{X} \ | \|\textbf{x}-\mathrm{e}^{A(t-t_0)}\textbf{x}_\mathrm{s}\|_2 \leq  r(t_0,t,u_\mathrm{max})  \}, \\
\mathcal{B}[t,t_f]&=\{\textbf{x}\in \mathcal{X} \ | \|\textbf{x}-\mathrm{e}^{-A(t_f-t)}\textbf{x}_\mathrm{g}\|_2 \leq r(t,t_f,u_\mathrm{max}) \}, \\
r(t_1,t_2,& u_\mathrm{max})=(\mathrm{e}^{\|A\|_2(t_2-t_1)}-1) {\|B\|u_\mathrm{max}}/{\|A\|_2}.  
\end{aligned}
\end{equation}
Here, $\|M\|_2$ represents the induced two norm (maximum singular value) for a matrix $M$. 
However, the above over-approximation might be too conservative for the current application. 
See Fig.~\ref{fig:reachset_compare}. 
If the reachable sets are overtly conservative and $\lambda(\Omega(T)) \approx \lambda(\mathcal{X})$, then TIE may result in little or no focus of the search.
\begin{figure}
	\centering
	\includegraphics[width=0.6\columnwidth]{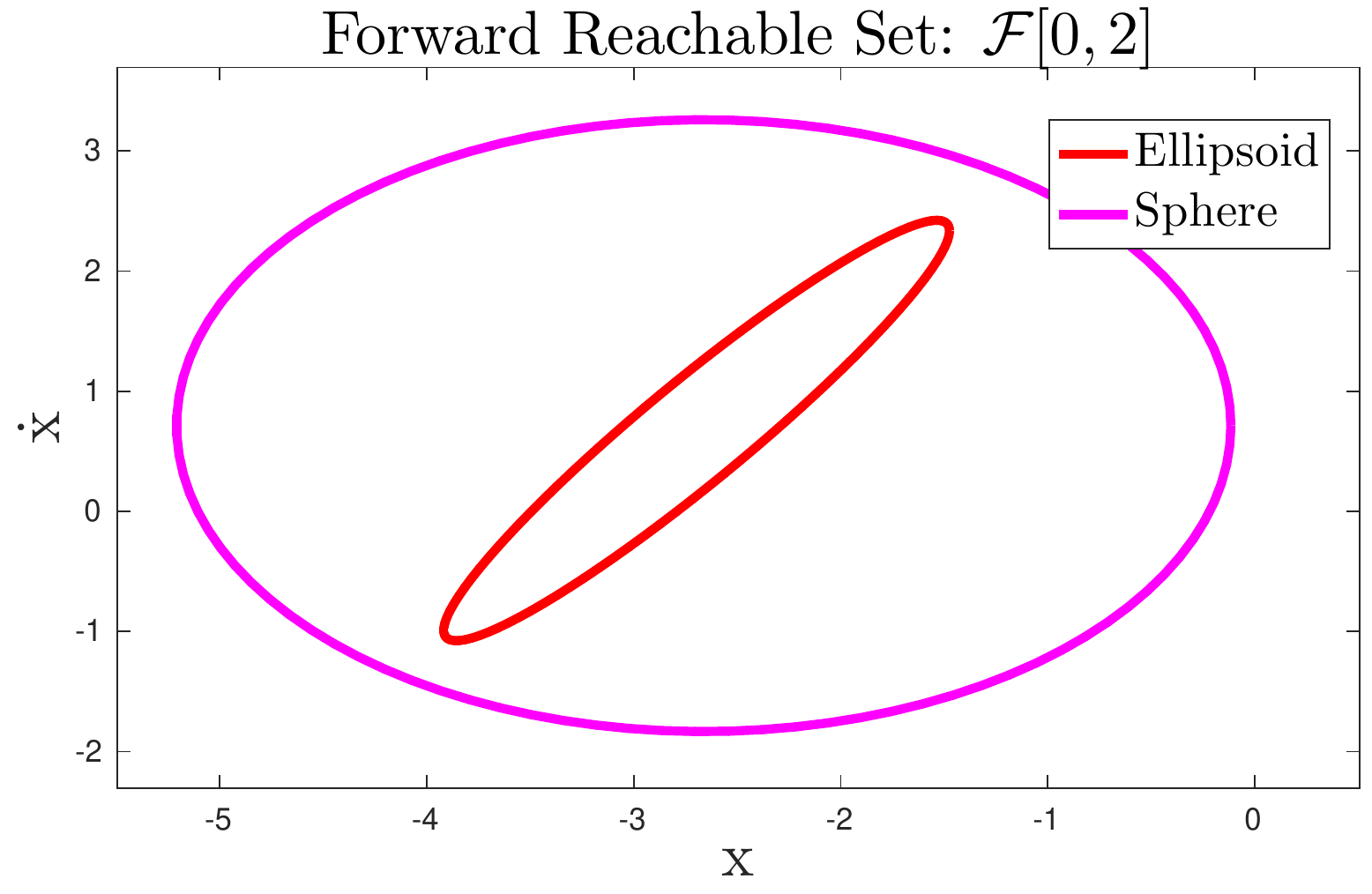}	
	\caption{Comparing the forward reachable set $\mathcal{F}[0,t]$ for the 2D system at $t=2$ using the hyper-sphere and ellipsoidal approximation.}
	\label{fig:reachset_compare}
\end{figure}
In contrast, the ellipsoidal technique~\cite{kurzhanskiy2010computation} approximates the reachable sets as ellipsoids, 
\begin{equation}
\mathcal{E}(\textbf{x}_c,Q)=\{\textbf{x} \in \mathbb{R}^n | \langle \textbf{x}-\textbf{x}_c,Q^{-1}(\textbf{x}-\textbf{x}_c) \rangle \leq 1  \}. 
\end{equation}
Here, $\textbf{x}_c$ is the center and $Q$ is the positive definite shape matrix of the ellipsoid.   
Forward and backward reachable sets, $\mathcal{F}[0,t], \mathcal{B}[t,T]$ can be obtained by solving an ordinary differential equation (ODE) for the center and shape matrix. 
Please see the Ellipsoidal Toolbox\footnote{\url{http://systemanalysisdpt-cmc-msu.github.io/ellipsoids/doc/main_manual.html}} documentation for a brief overview.
Note that the boundary conditions for the forward and backward reachable set ODE are the start and goal ellipsoids respectively.
From the problem definition in (\ref{eq:TimeOptimalProblem}), the start ellipsoid is encoded as a hyper-sphere with negligible radius around the center $\textbf{x}_\mathrm{s}$. 
The goal set $\mathcal{X}_\mathrm{g}$ is represented also as a hyper-sphere with a set radius around a center $\textbf{x}_\mathrm{g} \in \mathcal{X}_\mathrm{g}$.   
The ODE for the shape matrix can be solved and stored off-line. 
An analytical solution for the ODE describing the center's  trajectory can also be constructed. 
Thus, a "library" of reachable sets $\mathcal{F}[0,t], \mathcal{B}[t,T]$ can be created off-line to be used in the sampling and vertex inclusion algorithm described below. 
This library stores the value of center vector $\textbf{x}_c$ and matrices $Q$, $L$ of the forward and backward reachable sets.
Here $L$ is obtained using the Cholesky decomposition of $Q$, $Q=LL^\mathsf{T}$ and is used for generating samples inside $\mathcal{E}(\textbf{x}_c,Q$) \cite{gammell2018informed}.
Please see Fig.~\ref{fig:toy2d_reachsets} for a visualization of $\mathcal{F}[0,t]$ and $\mathcal{B}[t,T]$ constructed using the ellipsoid technique.

\subsection{Sampling Algorithm}

Algorithm \ref{alg:generateSample} describes a procedure to generate a new sample $\textbf{x}_\mathrm{rand}$ in $\Omega(T)$.
Notice from (\ref{eq:omegaT}) that $\Omega(T)$ consists of a union over the intersections of sets. Devising a direct sampling technique to generate uniform random samples in $\Omega(T)$ (as done for the $L_2$ Informed Set in \cite{gammell2018informed}) is hence a challenging task.
The proposed algorithm proceeds by first sampling a time $t$ in the interval $(0,T)$ according to a probability distribution $p_{[0,T]}(t)$ (line 2).
Ideally, to generate uniform random samples in $\Omega(T)$ with respect to the Lebesgue measure, this distribution needs to be $p_{[0,T]}(t)=\lambda(\mathcal{F}[0,t] \cap \mathcal{R}_b[t,T])/\lambda(\Omega(T))$. However, calculating and sampling from this distribution may not be tractable for general higher dimensional systems. %
Hence, for the sake of simplicity, we choose $p_{[0,T]}(t)$ to be uniform over the interval $[0,T]$.
Given $t$, the sets $\mathcal{F}[0,t],\mathcal{B}[t,T]$ can then be obtained from the library of stored reachable sets as discussed in the previous section.
We leverage the fact that $\mathcal{F}[0,t] \cap \mathcal{B}[t,T] \neq \varnothing$ from Lemma \ref{lemma:ftintersectbt} to generate a $\textbf{x}_\mathrm{rand} \in \mathcal{F}[0,t] \cap \mathcal{B}[t,T]$.
If the Lebesgue measure of $\mathcal{F}[0,t]$ is less than $\mathcal{B}[t,T]$, a uniform sample is generated in $\mathcal{F}[0,t]$ and checked if it belongs to $\mathcal{B}[t,T]$, otherwise, $\mathcal{B}[t,T]$ is sampled and checked if it belongs to $\mathcal{F}[0,t]$ (lines 4-13).
Notice from Fig.~\ref{fig:toy2d_reachsets} that $\lambda(\mathcal{F}[0,t])$ increases and $\lambda(\mathcal{B}[t,T])$ decreases as $t$ varies from $0$ to $T$. 
An efficient algorithm for generating uniform samples inside a hyper-ellipsoid is discussed in \cite{gammell2018informed}.
If no $\textbf{x}_\mathrm{rand} \in \mathcal{F}[0,t] \cap \mathcal{B}[t,T]$ can be generated in $n_s$ attempts, the algorithm returns a uniform random sample from the search-space $\mathcal{X}$ (line 14-15).

\IncMargin{.5em}
\begin{algorithm}[t]
	\caption{Sampling Algorithm}
	\label{alg:generateSample}
	\SetKwFunction{generateSample}{}
	\SetKwProg{Fn}{generateSample}{:}{}
	\Fn{\generateSample{$\normalfont T$ }}
	{
		$t \sim p_{[0,T]}(t)$\;
		\For{$i=1:n_s$}
		{
			\If{$\lambda(\mathcal{F}[0,t])<\lambda(\mathcal{B}[t,T])$}
			{
				$\textbf{x}_\mathrm{cand} \leftarrow \mathrm{sampleUniform}(\mathcal{F}[0,t])$\;
				\If{$\normalfont \textbf{x}_\mathrm{cand} \in \mathcal{B}[t,T]$}
				{
					$\textbf{x}_\mathrm{rand} \leftarrow \textbf{x}_\mathrm{cand}$\;
					\KwRet $\textbf{x}_\mathrm{rand}$\;	
				}
			}
			\Else 
			{
				$\textbf{x}_\mathrm{cand} \leftarrow \mathrm{sampleUniform}(\mathcal{B}[t,T])$\;
				\If{$ \normalfont \textbf{x}_\mathrm{cand} \in \mathcal{F}[0,t]$}
				{
					$\textbf{x}_\mathrm{rand} \leftarrow \textbf{x}_\mathrm{cand}$\;
					\KwRet $\textbf{x}_\mathrm{rand}$\;	
				}
			}					
			
		}
		$\textbf{x}_\mathrm{rand} \leftarrow \mathrm{sampleUniform}(\mathcal{X})$\;			
		\KwRet $\textbf{x}_\mathrm{rand}$\;					
	}
\end{algorithm}
\DecMargin{.5em}
\IncMargin{.5em}
\begin{algorithm}[t]
	\caption{Vertex Inclusion Algorithm}
	\label{alg:includeVertex}
	\SetKwFunction{includeVertex}{}
	\SetKwProg{Fn}{includeVertex}{:}{}
	\Fn{\includeVertex{${\normalfont\textbf{v}},{\normalfont t },{\normalfont T}$ }}
	{
		\If{$t>T$}
		{
			\KwRet false \;
		}
		\ForEach{$\tau \in\{t+\delta,t+2\delta, \dots T \}$}
		{
			\If{$\normalfont \textbf{v}\in \mathcal{B}[t,\tau]$}
			{
				\KwRet true \;
			}
		}		
		\KwRet false\;				
	}
\end{algorithm}
\DecMargin{.5em}

\subsection{Vertex Inclusion Algorithm}

The vertex inclusion procedure, described in Algorithm~\ref{alg:includeVertex}, accepts a candidate vertex if it lies in $\Omega(T)$.
Consider a candidate vertex $\textbf{v}$ with cost-to-come $t$, i.e., the cost of trajectory from $\textbf{x}_\mathrm{s}$ to $\textbf{v}$ is $t$.
Since the cost-to-come is $t$, we have $\textbf{v} \in \mathcal{F}[0,t]$. 
Thus, if $\textbf{v} \in \mathcal{R}_b[t,T]$, then $\textbf{v} \in \Omega(T)$.
The proposed algorithm discretizes the interval $[t,T]$ with a step-size $\delta$.
A vertex is accepted if it lies in any $\mathcal{B}[t,\tau]$, for $\tau \in \{t+\delta,t+2\delta, \dots T \}$ (line 4-6).
The sets $\mathcal{B}[t,\tau]$ are again obtained from the stored library of reachable sets. 

In order to maintain the theoretical guarantees of TIE, an over-estimate of the solution cost $T$ is required.
This over-estimate can be obtained (and updated) after the planner discovers (and then improves) an initial, sub-optimal solution. 
Also, learning-based methods similar to \cite{chiang2019rl} can be used to obtain an estimate of the solution cost given a planning environment. 
In this work, the above algorithms are called only after an initial solution is discovered.
\begin{figure}
	\centering
	\includegraphics[width=0.6\columnwidth]{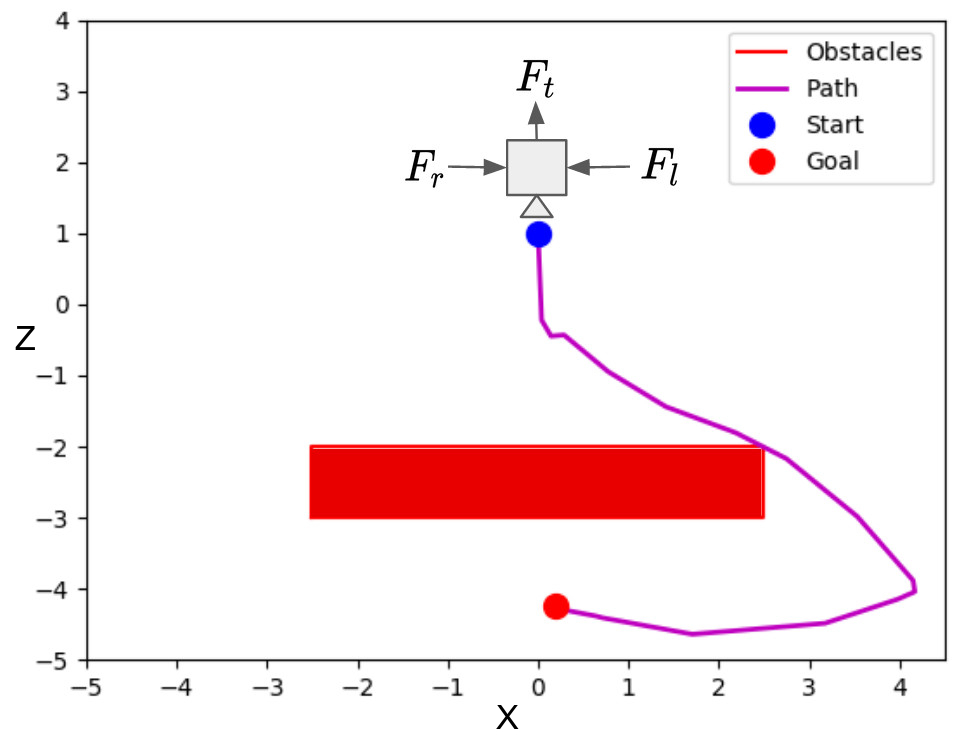}
	\includegraphics[width=0.6\columnwidth]{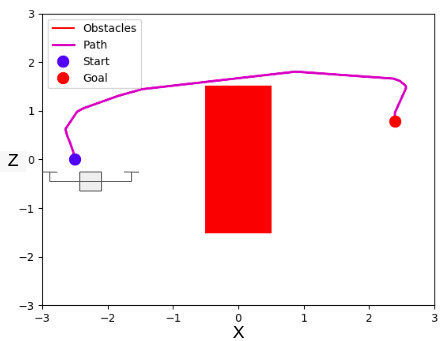}
	\caption{A schematic for the moon-lander robot (top) and quadrotor (bottom) simulation cases with sample solution paths found by the proposed algorithm after 40 sec of planning time.}
	\label{fig:moonlander_quadrotor}
\end{figure}

\section{NUMERICAL EXPERIMENTS}
\begin{figure*}
	\centering
	\includegraphics[width=0.64\columnwidth,height=0.37\columnwidth]{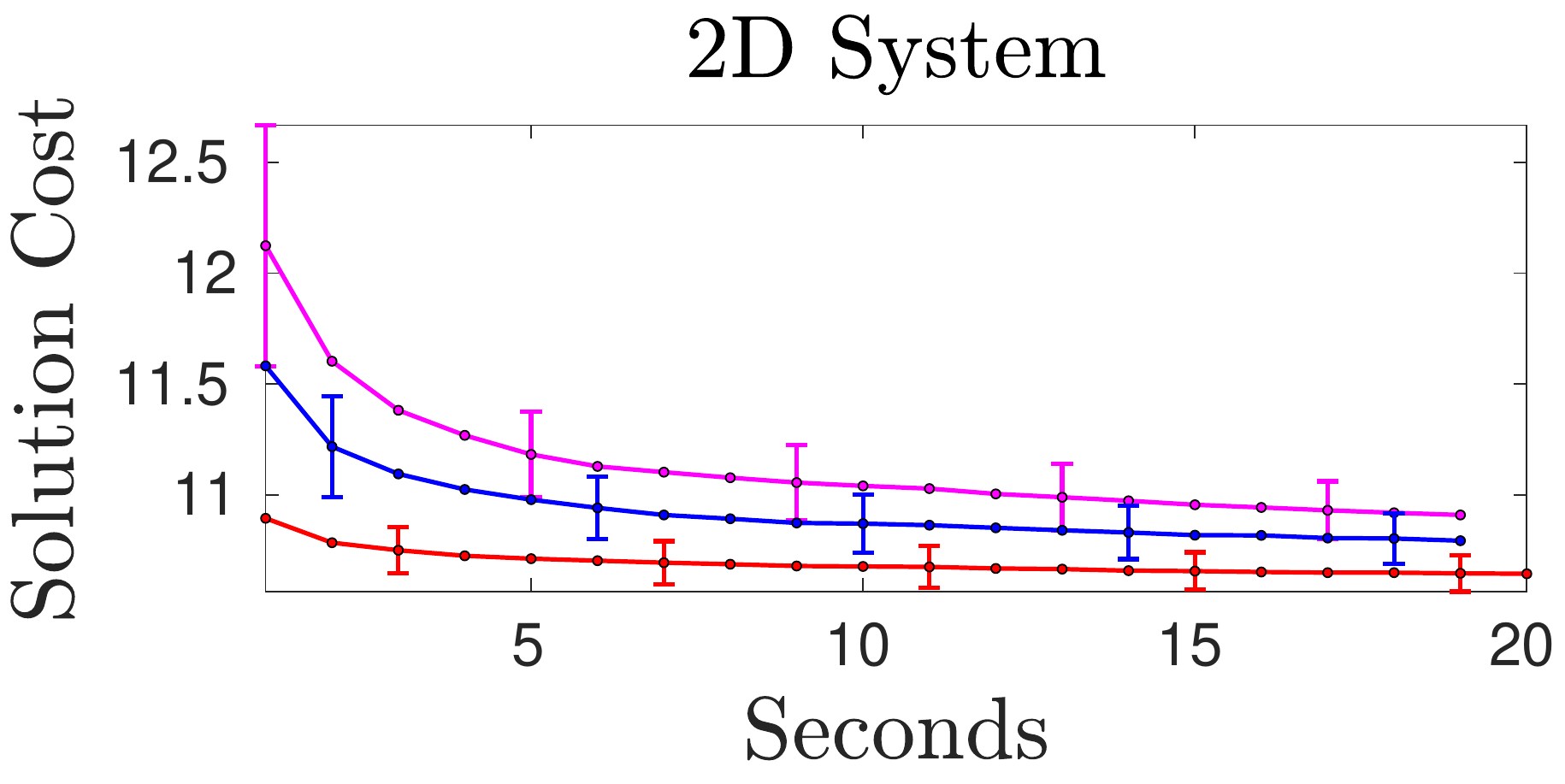}
	\includegraphics[width=0.64\columnwidth,height=0.37\columnwidth]{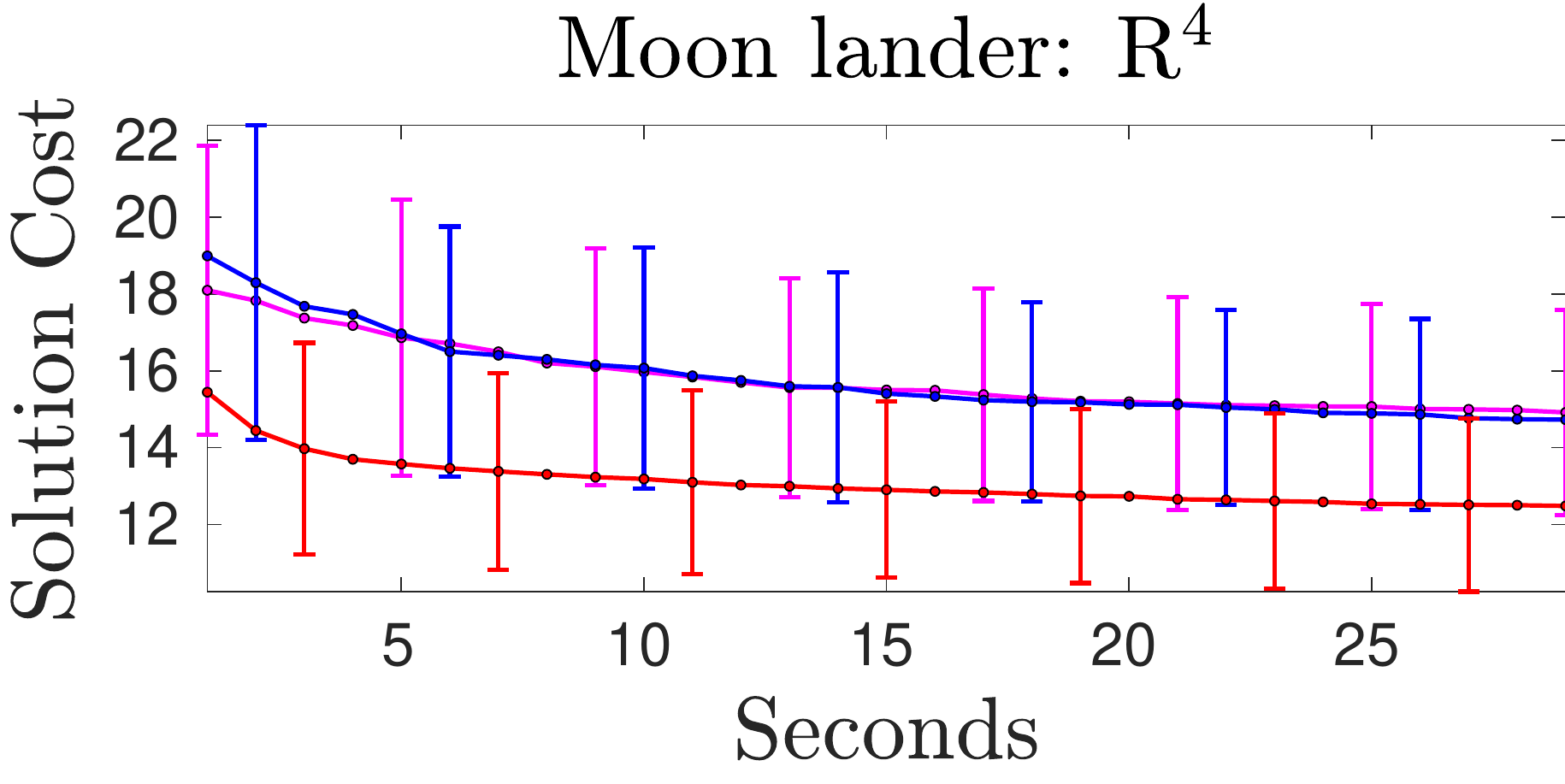}
	\includegraphics[width=0.64\columnwidth,height=0.37\columnwidth]{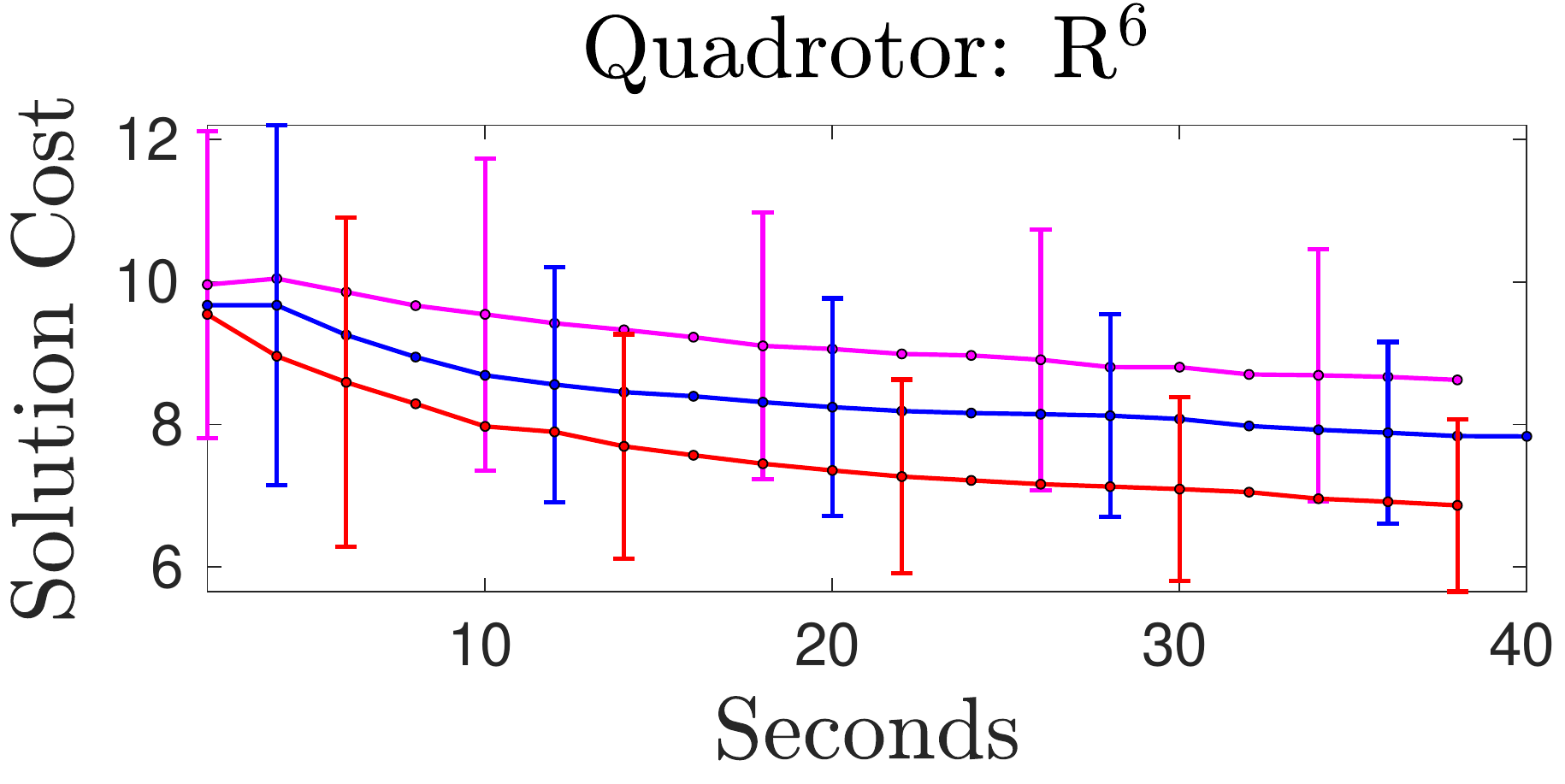}
	
	\includegraphics[width=0.64\columnwidth,height=0.37\columnwidth]{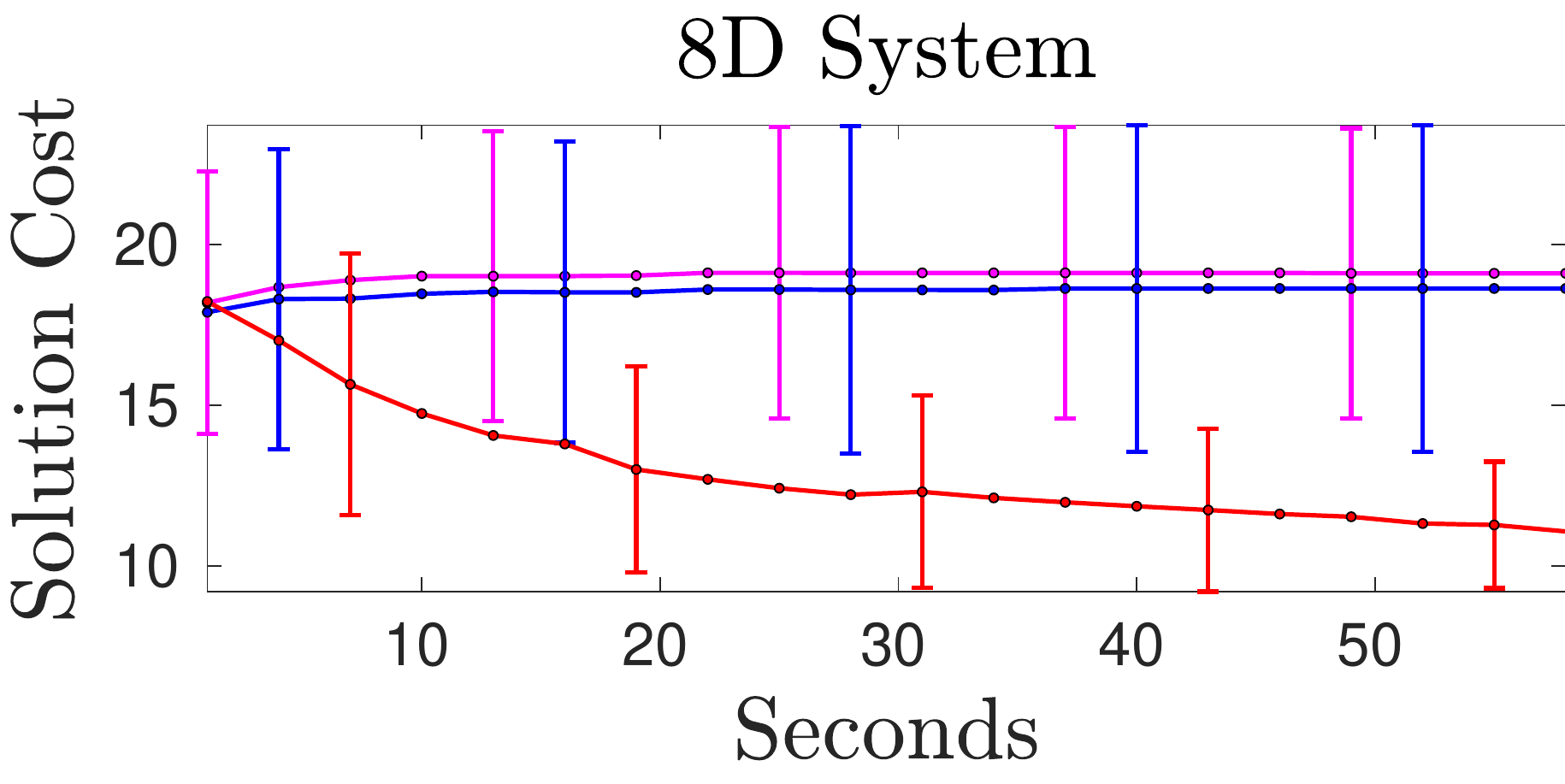}
	\includegraphics[width=0.64\columnwidth,height=0.37\columnwidth]{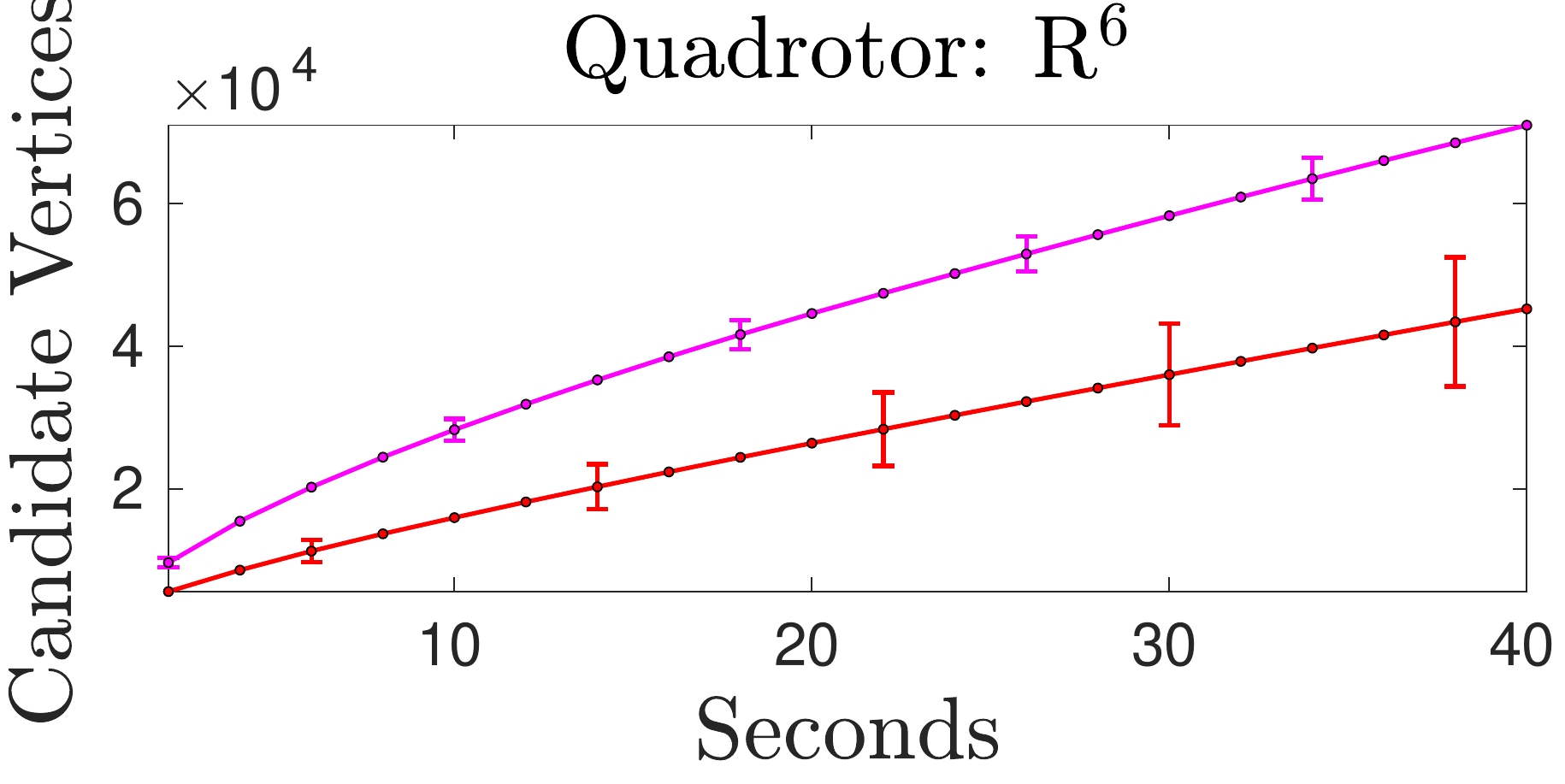}
	\includegraphics[width=0.64\columnwidth,height=0.37\columnwidth]{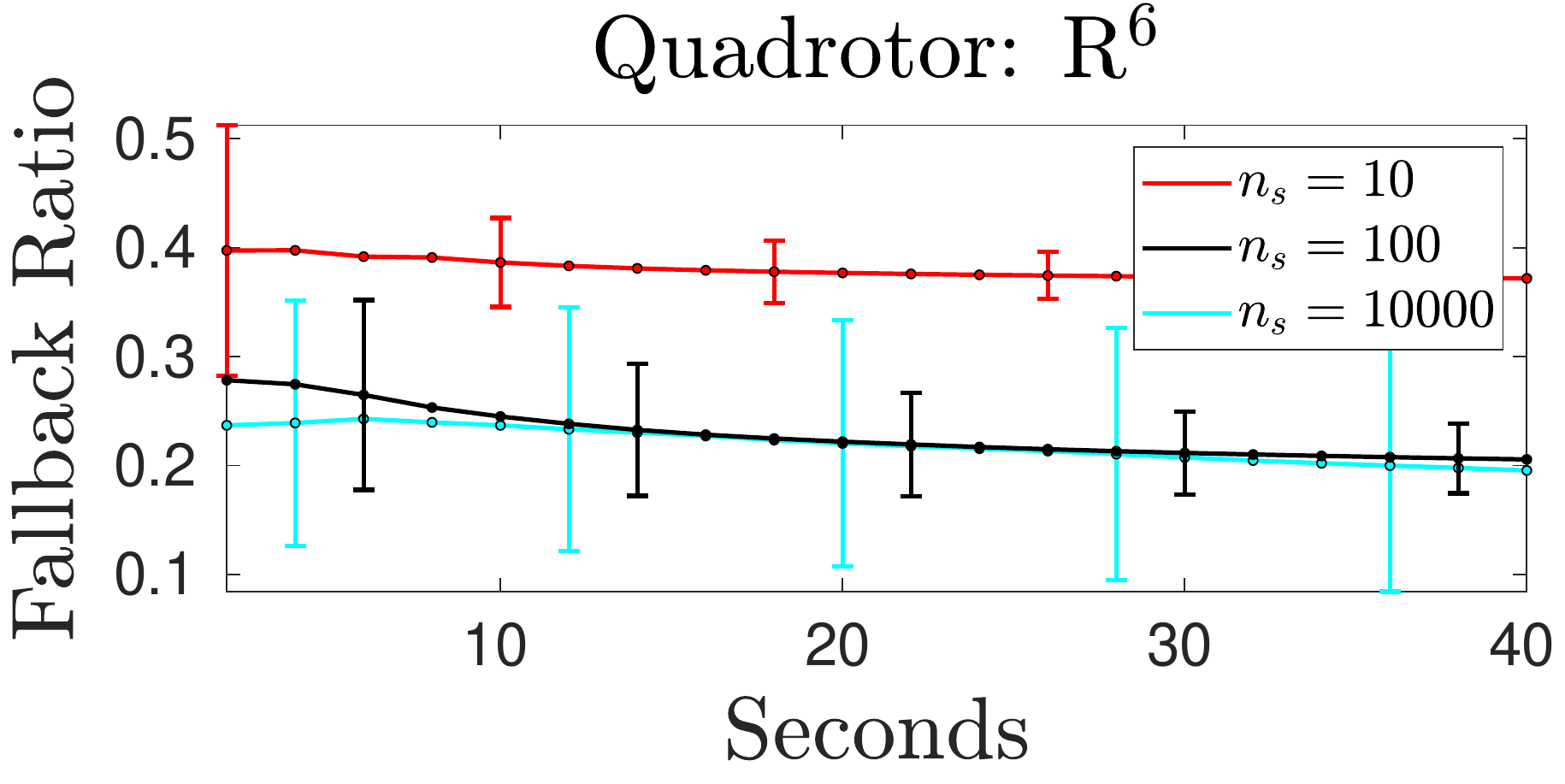}
	
	\includegraphics[width=0.64\columnwidth]{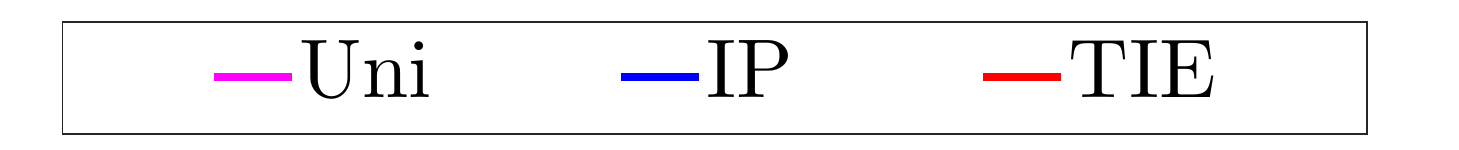}
	\caption{Convergence plots for the numerical experiments. Using the proposed TIE leads to a faster convergence in all cases (red plot). 
	The bottom middle figure illustrates number of candidate vertices generated using uniform and TIE exploration method.
	The bottom right figure plots the fallback ratio for different values of $n_S$.
	Solid lines indicate the value averaged over 100 trials and the error bars represent the standard deviation.}
	\label{fig:convergenceplots}
\end{figure*}
Benchmarking experiments were performed by pairing different exploration strategies with the SST planner~\cite{li2015sparse}.
All algorithms were implemented in C++ using the OMPL framework~\cite{sucan2012open}, and the tests were run using OMPL's standardized benchmarking tools~\cite{moll2015benchmarking}.
The data was recorded over 100 trials for all the cases on a 64-bit laptop PC with 16 GB RAM and an Intel i7 Processor, running Ubuntu 16.04 OS.
The performance of the proposed exploration strategy was benchmarked against uniform sampling (Uni) and uniform sampling combined with Informed propagation (IP).
Informed propagation essentially rejects expansion vertices with cost-to-come $t>T$, if there exists a sub-optimal solution with cost $T$.
If $t<T$, then forward propagation from the vertex is done for at most $T-t$ duration.
Thus, Informed Propagation (IP) prohibits exploration outside the set $\bigcup_{0 \leq \tau \leq T} \mathcal{F}[0,\tau]$.
The proposed Time-Informed exploration (TIE) algorithm uses the sampling and vertex inclusion procedures described in Algorithms~\ref{alg:generateSample} and \ref{alg:includeVertex} with $n_s=10$ and $\delta=0.1$.
The SST planner parameters, namely, the selection and pruning radius were set to standard OMPL values of 0.2 and 0.1 respectively. The $L_2$-norm was used as the distance function.
A description of different case-studies is given below. 

\noindent \textbf{2D System: }
Consider a 2D kino-dynamic system $\dot{\textbf{x}}=A_{2\times2}\textbf{x}+B_{2\times1}u $ with, $\textbf{x}=[x,\dot{x}]^\T$ and 
\begin{equation}
A_{2\times2}=
\begin{bmatrix}
0.0 & 0.5\\
-0.1 & 0.2\\
\end{bmatrix},
\quad
B_{2\times1}=
\begin{bmatrix}
0\\
1
\end{bmatrix}.
\end{equation}
The set-up of the planning problem is illustrated in Fig.~\ref{fig:toy2d}, with $\textbf{x}_\mathrm{s}=[-3 ~0]^\T,\textbf{x}_\mathrm{g}=[3 ~0]^\T, \mathcal{X}_\mathrm{g}= \mathcal{E}(\textbf{x}_\mathrm{g}, 0.25~ \mathrm{I}_2),u \in [-0.5 ~0.5]$. Here, $\mathrm{I}_2$ represents the $2 \times 2$ identity matrix. 

\noindent \textbf{8D System:} The 2D system described above is extended to a 8D system $\dot{\textbf{x}}=A_{8\times8}\textbf{x}+B_{8 \times 4}\textbf{u}$, with $\textbf{x}=[x_1 ~\dot{x}_1 ~x_2 ~\dot{x}_2 ~x_3 ~\dot{x}_3 ~x_4 ~\dot{x}_4]^\T, \textbf{u}=[u_1 ~u_2 ~u_3 ~u_4]^\T$ and 
\begin{equation}
\begin{aligned}
A_{8\times8} &= 
\mathrm{blkdiag}[A_{2 \times 2}, A_{2 \times 2},A_{2 \times 2},A_{2 \times 2}],\\
B_{8 \times 4} &= \mathrm{blkdiag}[B_{2 \times 1}, B_{2 \times 1},B_{2 \times 1},B_{2 \times 1}].
\end{aligned}
\end{equation}
The single obstacle in 2D case was extended to 8D by adding a length of 2 units symmetrically in the extra dimensions.
Also, $\textbf{x}_\mathrm{s}=[-2 ~0 ~0 ~0 ~0 ~0 ~0 ~0]^\T,\textbf{x}_\mathrm{g}=[2 ~0 ~0 ~0 ~0 ~0 ~0 ~0]^\T, \mathcal{X}_\mathrm{g}= \mathcal{E}(\textbf{x}_\mathrm{g}, \mathrm{I}_8),u_i \in [-1 ~1], i \in \{ 1,2,3,4 \}$.

\noindent \textbf{Moon-lander Robot:} 
A simplified version of a planar "moon-lander robot" is illustrated in Fig.~\ref{fig:moonlander_quadrotor}. The robot has three thrusters $F_l,F_r$ and $F_t$ acting in the left, right and up direction respectively. In the absence of upwards thrust, the robot falls under gravity. The dynamics of the robot is assumed to be as follows.
\begin{equation}
\frac{\dint}{\dint t}
\begin{bmatrix}
x\\ z\\ \dot{x}\\ \dot{z}
\end{bmatrix}
=
\begin{bmatrix}
0 & 0 & 1 & 0\\
0 & 0 & 0 & 1\\
0 & 0 & 0 & 0\\
0 & 0 & 0 & 0
\end{bmatrix}
\begin{bmatrix}
x\\ z\\ \dot{x}\\ \dot{z}
\end{bmatrix}
+
\begin{bmatrix}
0 & 0 & 0\\
0 & 0 & 0\\
-2& 1 & 0\\
0& 0 &1
\end{bmatrix}
\begin{bmatrix}
F_l\\ F_r \\F_t
\end{bmatrix}.
\end{equation}
The start, goal and admissible control space were set as follows: $\textbf{x}_\mathrm{s}=[0 ~1 ~0 ~-2]^\T,\textbf{x}_\mathrm{g}=[0 ~-4 ~0 ~0]^\T, \mathcal{X}_\mathrm{g}= \mathcal{E}(\textbf{x}_\mathrm{g}, 0.25~ \mathrm{I}),F_l \in [0 ~1],F_r \in [0 ~1],F_t \in [-2 ~2]$. The objective is to land the robot in time-optimal fashion.

\noindent \textbf{Planar Quadrotor model:}
A linearized quadrotor model for longitudinal flight based on \cite{michael2010grasp} can be written as $\dot{\textbf{x}}=A_{6\times6}\textbf{x}+B_{6 \times 2}\textbf{u}$, with $\textbf{x}=[x ~z ~u ~w ~q  ~\theta]^\T, \textbf{u}=[f_t ~\tau_y]^\T$ and 
\begin{equation}
A_{6\times6}
=
\begin{bmatrix}
0 & 0 & 1 & 0 & 0 &0\\
0 & 0 & 0 & 1 & 0 & 0\\
0 & 0 & 0 & 0 & 0 &-g\\
0& 0 & 0 & 0 & 0 & 0\\
0& 0 & 0 & 0 & 0 & 0\\
0 & 0 & 0 & 0 & 1 & 0
\end{bmatrix}
B_{6 \times 2} 
=
\begin{bmatrix}
0 & 0\\
0 & 0\\
0 & 0\\
1/m & 0\\
0 & 1/I_y\\
0 & 0
\end{bmatrix}
\end{equation}
The start, goal and admissible control space were set as follows: $\textbf{x}_\mathrm{s}=[-2.5 ~0 ~0 ~0 ~0 ~0]^\T,\textbf{x}_\mathrm{g}=[2.5 ~0 ~0 ~0 ~0 ~0]^\T,\mathcal{X}_\mathrm{g}= \mathcal{E}(\textbf{x}_\mathrm{g}, \mathrm{I}), f_t \in [-1 ~1],\tau_y \in [-1 ~1]$. 
The set-up for time-optimal planning problem is shown in Fig.~\ref{fig:moonlander_quadrotor}. 

The results of the numerical simulations are illustrated in Fig.~\ref{fig:convergenceplots}. 
It can be seen that Informed Propagation (blue) performs better than the na\"{\i}ve uniform exploration (magenta). 
However, using the proposed TIE strategy, a combination of Algorithm~\ref{alg:generateSample} and \ref{alg:includeVertex}, outperforms the other methods in all cases. 
Note that for a planner such as SST, the sampling procedure influences the vertex to be selected for forward propagation. 
Generating random samples $\textbf{x}_\mathrm{cand} \in \Omega(T)$ biases the selection of vertices in the TIS for expansion.
After a vertex is selected, expansion is performed by forward propagating the system dynamics to generate a new candidate vertex $\textbf{v}$. 
The vertex inclusion algorithm then ensures that the candidate vertex $\textbf{v} \in \Omega(T)$. 
Thus, the combination of the proposed sampling and inclusion algorithm avoids redundant exploration focuses search, and leads to a faster convergence in all cases. 
In order to study the computational cost incurred by the TIE procedure, the quadrotor simulation was run without obstacles. 
Compared to uniform sampling, TIE generates a lower number of feasible, candidate vertices for inclusion in the planner tree, as illustrated in Fig.~\ref{fig:convergenceplots}.
Future work will explore leveraging GPUs and operating on batch of samples and reachable sets in parallel. 
Parameter $n_s$ controls the maximum number of attempts made to generate a new sample $\textbf{x}_\mathrm{cand} \in  \Omega(T)$ before a uniform random sample is returned.
In order to analyze the effect of $n_s$, the "fallback ratio" is defined as,
\begin{displaymath}
    \text{Fallback Ratio}= \frac{\text{Number of Fallbacks to Uniform Sampling}}{\text{Number of Calls to the TIE Sampler}}
\end{displaymath}
The fallback ratio was found to be negligible for the lower dimensional 2D system. 
It is relatively higher for the 6D quadrotor simulation run in a no-obstacle environment (see Fig.~\ref{fig:convergenceplots}). 
This ratio can be decreased by increasing $n_s$. 
However, a large value of $n_s$ corresponds to
a larger amount of computations invested in the sampling procedure, which can adversely impact the convergence of solution cost.
Note that while the sampling algorithm may return $\textbf{x}_\mathrm{cand}$ outside the TIS if $n_s$ attempts are exhausted, the vertex inclusion procedure ensures that a candidate vertex $\textbf{v}$ is incorporated in the planner tree only if it lies in the TIS.

\section{CONCLUSION}

In this work, we use ideas from reachability analysis to define a "Time-Informed Set", to focus exploration after an initial solution is found.
We prove that exploring the TIS is a necessary condition to improve the current solution.
The proposed method can be applied to a variety of systems for which an efficient local steering module may not be available, but (over-)approximations of the reachable sets can be constructed.

It should be noted that  the $L_2$-Informed set is \textit{sharp} \cite{gammell2018informed}, i.e., it  uses a heuristic estimate which gives the exact cost-to-come and cost-to-go for any point in the absence of obstacles. The TIS is not so, as it is constructed using \textit{over-approximations} of the reachable sets. 
Hence, finding tight approximations of the reachable sets is critical for the efficacy of the proposed approach.


In order to apply TIE for sampling-based planning, the reachability library needs to constructed offline. 
Creating, storing and accessing this library should be computationally efficient  for  higher  dimensional  systems to be of use in practice.
The ellipsoidal reachable sets used in this work satisfy these criteria. 
The HJB reachability toolboxes \cite{bansal2017hamilton} can be potentially used to create this library for a general non-linear systems.
These frameworks solve the value function PDE by discretizing the state space. However, 
the computational cost of these methods scale exponentially with the dimension.
In order to address this curse of dimensionality, application of deep-learning frameworks for reachability, such as \cite{allen2014machine}, \cite{bansal2020deepreach}, can be explored.
Recent works such as DeepReach~\cite{bansal2020deepreach} avoid gridding the state space and use deep neural networks (DNN) to learn a parameterized approximation of the value function.
These DNNs can be stored and used to classify or generate new samples in the TIS.

\vspace*{1ex}

\textbf{Acknowledgements:} The authors thank Dipankar Maity and Kelsey Hawkins for insightful discussions on this topic. This work has been supported by NSF awards IIS-1617630 and IIS-2008686.

\bibliographystyle{IEEEtran}
\bibliography{refs}	
	
\end{document}